\def\eqref#1{equation~\ref{#1}}
\def\1{\bm{1}}
\DeclareMathAlphabet{\mathsfit}{\encodingdefault}{\sfdefault}{m}{sl}
\SetMathAlphabet{\mathsfit}{bold}{\encodingdefault}{\sfdefault}{bx}{n}
\newcommand{\sigmoid}{\sigma}
\definecolor{gray}{RGB}{118, 113, 113}
\definecolor{lightgray}{RGB}{173, 168, 168}
\definecolor{RED}{RGB}{196, 50, 51}
\definecolor{myred}{RGB}{183, 8, 38}
\definecolor{myblue}{RGB}{66, 88, 203}
\definecolor{tableblue}{RGB}{205, 232, 248}
\definecolor{rebuttle}{RGB}{0,0,0}
\title{Dichotomous Diffusion Policy Optimization}
\author{Ruiming Liang$^{1,2}$\footnotemark[1]\enspace\footnotemark[3]~~, 
Yinan Zheng$^{3}$\footnotemark[1]\enspace\footnotemark[5]~~, 
Kexin Zheng$^{4}$\footnotemark[1]\enspace\footnotemark[3]~~, 
Tianyi Tan$^{3}$\footnotemark[1]~~, Jianxiong Li$^3$, \\ \textbf{Liyuan Mao$^{5}$, Zhihao Wang$^6$\footnotemark[3]~~, Guang Chen$^7$, Hangjun Ye$^7$,} \textbf{Jingjing Liu$^3$,} \\ \textbf{Jinqiao Wang$^{1,2}$ \footnotemark[2]~~, Xianyuan Zhan$^3$}\footnotemark[2] \\
$^1$ Fundation Model Research Center, Institute of Automation, Chinese Academy of Sciences \\$^2$ School of Artificial Intelligence, University of Chinese Academy of Sciences\quad \\
$^3$ Institute for AI Industry Research (AIR), Tsinghua University \quad \\
$^4$ The Chinese University of Hong Kong \quad $^5$ Shanghai Jiao Tong University \\
$^6$ Peking University \quad$^7$ Xiaomi EV\\
\texttt{liangruiming2024@ia.ac.cn,} \texttt{zhengyn23@mails.tsinghua.edu.cn,} \\ \texttt{zhanxianyuan@air.tsinghua.edu.cn}\\
}
\begin{document}

\maketitle

\renewcommand{\thefootnote}{\fnsymbol{footnote}}
\footnotetext[1]{Equal contribution.}
\footnotetext[2]{Corresponding authors.}
\footnotetext[3]{Work done during internships at Institute of AI Industry Research (AIR), Tsinghua University.}
\footnotetext[5]{Project lead.}
\renewcommand*{\thefootnote}

\begin{abstract}
Diffusion-based policies have gained growing popularity in solving a wide range of decision-making tasks due to their superior expressiveness and controllable generation during inference. However, effectively training large diffusion policies using reinforcement learning (RL) remains challenging. Existing methods either suffer from unstable training due to directly maximizing value objectives, or face computational issues due to relying on crude Gaussian likelihood approximations, which require a large amount of sufficiently small denoising steps.
In this work, we propose \textit{DIPOLE} (\textbf{Di}chotomous diffusion \textbf{Pol}icy improv\textbf{e}ment), a novel RL algorithm designed for stable and controllable diffusion policy optimization. We begin by revisiting the KL-regularized objective in RL, which offers a desirable weighted regression objective for diffusion policy extraction, but often struggles to balance greediness and stability.
We then formulate a greedified policy regularization scheme, which naturally enables decomposing the optimal policy into a pair of stably learned dichotomous policies: one aims at reward maximization, and the other focuses on reward minimization. 
Under such a design, optimized actions can be generated by linearly combining the scores of dichotomous policies during inference, thereby enabling flexible control over the level of greediness.
Evaluations in offline and offline-to-online RL settings on ExORL and OGBench demonstrate the effectiveness of our approach. We also use \textit{DIPOLE} to train a large vision-language-action (VLA) model for end-to-end autonomous driving (AD) and evaluate it on the large-scale real-world AD benchmark NAVSIM, highlighting its potential for complex real-world applications. Project Page: \href{https://lrmbbj.github.io/DIPOLE/}{https://lrmbbj.github.io/DIPOLE/}

\end{abstract}

\section{Introduction}
Due to the strong capability of diffusion models~\citep{sohldickstein2015deep,ho2020denoising} in modeling multi-modal action distributions and controllable generation during inference~\citep{dhariwal2021diffusion, ho2022cfg}, modeling policies using diffusion models has become a popular choice in solving complex decision-making tasks such as embodied robotics~\citep{chi2023diffusion,zheng2025x} and autonomous driving~\citep{zheng2025diffusionplanner,tan2025flow, li2025discrete}.
Although proven to be effective in imitation learning-based settings, training large diffusion/flow matching policies that surpass data-level performance with reinforcement learning (RL)~\citep{sutton1998reinforcement} has remained an important yet challenging direction.

Training diffusion policies with RL faces numerous challenges, most notably, learning stability and computation efficiency. A na\"ive approach to train diffusion policies with RL is to directly optimize the reward or value objective via gradient backpropagation through the multi-step denoising process~\citep{xu2023imagereward, clark2023directly}, which often suffers from noisy and unstable gradient updates, while also being extremely costly.
To avoid this, some studies adopt a compromise by freezing the diffusion model and instead searching for optimized noises~\citep{wagenmaker2025steering, hansen2023idql}, a strategy often referred to as inference-time scaling~\citep{ma2025inference}. However, these approaches rely heavily on well-pretrained diffusion policies and are fundamentally limited by their performance upper bound. Another explored direction is to adopt policy gradient methods (such as PPO~\citep{schulman2017proximal}) for diffusion policy optimization, which models the denoising process as a multi-step Markov decision process (MDP) and uses Gaussian approximations to compute the log-likelihood of intermediate denoising steps~\citep{black2023training, ren2024diffusion}.
However, the crude Gaussian-based approximation only provides reasonable likelihood information when adopting sufficiently small denoising steps, which inevitably results in large exploration spaces and prolonged training, making such methods difficult to scale and prone to approximation error accumulation in practice. Therefore, a critical research question arises:
\textit{Can we build a more effective and stable RL method for diffusion policy optimization? }

To answer this question, we turn our attention to the KL-regularized RL objective, which offers a nice, closed-form weighted regression objective for optimal policy extraction~\citep{peng2019advantage}. We can thus optimize a diffusion policy by incorporating an exponential reward- or value-based weighting term, scaled by a temperature parameter, into the standard diffusion regression loss~\citep{lee2023aligning, kang2023efficient, zheng2024safe}. Although promising, this approach also suffers from several limitations. A fundamental issue is that weighted regression can only achieve greedy reward maximization when the temperature parameter is set to a large value, which easily leads to exploding loss and training instability. Moreover, the learning loss becomes dominated by a small number of high-reward samples, which severely undermines training effectiveness and scalability even with increased data~\citep{park2024value}.
To address the previous challenges, we propose \textit{DIPOLE} (\textbf{Di}chotomous diffusion \textbf{Pol}icy improv\textbf{e}ment), a novel RL framework designed for highly stable and controllable diffusion policy optimization.
Specifically, we introduce a greedified KL-regularized RL objective, which regularizes policy learning towards a value-reweighted reference policy. 
Interestingly, we show that the original unstable exponential weighting term in the optimal policy can be decomposed into two bounded smooth dichotomous terms. This naturally allows us to decompose the optimal policy into a pair of stably learned dichotomous policies: one aims at reward maximization and the other focuses on reward minimization.
Moreover, the optimized policy can be recovered through a linear combination of the scores from both dichotomous policies, which closely aligns with the widely used classifier-free guidance mechanism in diffusion models~\citep{ho2022cfg}, enabling perfect controllability over the greediness of action generation.
 
Extensive experimental results demonstrate the effectiveness of \textit{DIPOLE} across a wide range of locomotion and manipulation tasks in ExORL~\citep{yarats2022exorl} and OGBench~\citep{ogbench_park2025} benchmarks, evaluated under both offline and offline-to-online RL settings. Furthermore, we scale our learning approach to a large vision-language-action (VLA) model and evaluate it on the large-scale real-world autonomous driving benchmark NAVSIM~\citep{Dauner2024NEURIPS}, showcasing significant performance improvements over the pre-trained baseline. These results highlight the strong applicability of \textit{DIPOLE} for complex, real-world decision-making scenarios.

\section{Preliminary}
\textbf{Reinforcement learning.}\quad We consider the RL problem presented as a Markov Decision Process (MDP), which is specified by a tuple $\mathcal{M}:=\left(\mathcal{S},\mathcal{A},\mathcal{P},r, \gamma\right)$. $\mathcal{S}$ and $\mathcal{A}$ represent the state and action space; $\mathcal{P}:\mathcal{S} \times \mathcal{A} \to \Delta(\mathcal{S})$ is transition dynamics; $r:\mathcal{S} \times \mathcal{A} \to \mathbb{R}$ is the reward function; and $\gamma \in (0,1)$ is the discount factor. We aim to find a policy $\pi:\mathcal{S} \to \Delta(\mathcal{A})$ that maximizes the expected return: $\mathbb{E}_{\pi}\left[\sum^{\infty}_{k=0}\gamma^{k}\cdot r(s_k,a_k) \right]$. We define the discounted visitation distribution as: $d^{\pi}(s)=(1-\gamma)\sum^{\infty}_{k=0}\gamma^{t}p(s_k=s\mid \pi)$, which measures how likely to encounter $s$ when interacting with the environment using policy $\pi$. We also consider a replay buffer $\mathcal{D}=\{s_i, a_i,r_i, s'_i\}_{i=1}^{N}$, which can be a static dataset in the offline setting or dynamically updated with new samples in the offline-to-online setting. The state-value function and action-value functions are defined as: $V^{\pi}(s)=\mathbb{E}_{\pi}\left[\sum^{\infty}_{k=0}\gamma^{k}\cdot r(s_k,a_k) \mid s_0=s \right]$ and $Q^{\pi}(s, a)=\mathbb{E}_{\pi}\left[\sum^{\infty}_{k=0}\gamma^{k}\cdot r(s_k,a_k) \mid s_0=s, a_0=a\right]$, and the advantage function is defined as $A^{\pi}(s,a) = Q^{\pi}(s,a)-V^{\pi}(s)$. Their optimal counterparts under the optimal policy $\pi^{\star}$ are denoted as $V^{\star}$, $Q^{\star}$, and $A^{\star}$.

\textbf{Diffusion/flow matching policies.}\quad Diffusion and flow matching models have attracted significant attention due to their strong expressiveness in capturing multi-modal data distributions, making them popular policy classes for complex decision-making tasks such as robotics~\citep{chi2023diffusion,black2024pi0visionlanguageactionflowmodel} and autonomous driving~\citep{zheng2025diffusionplanner,tan2025flow}. The action generation can be formulated as a state-conditional generation problem in which a probability path transforms a source distribution (typically a standard Gaussian) into a target action distribution. A neural network $\epsilon_{\theta}$ is trained to predict the noise along the path using the objective over a given dataset $\mathcal{D}$: 
\begin{equation}
\label{eq:diffusion}
\mathcal{L}_{\epsilon_{\theta}}=\mathbb{E}_{t\sim U[0,1],\epsilon\sim\mathcal{N}(\mathbf{0},\mathbf{I}),(s,a)\sim \mathcal{D}}\left[  \left\Arrowvert \epsilon-\epsilon_{\theta}\left(a_t,s,t\right) \right \Arrowvert^2 \right],
\end{equation}
where $a_t = \alpha_t a + \sigma_t \epsilon$ (we use the subscript $t$ to distinguish diffusion steps from MDP steps $k$), with $\alpha_t$ and $\sigma_t$ being predefined noise schedules commonly used in score-based diffusion models~\citep{song2021scorebased} or flow matching models~\citep{lipman2022flow}. The multi-step diffusion process endows diffusion models with strong distribution-fitting capabilities. However, it also poses challenges for RL fine-tuning: gradient propagation through the entire diffusion process is costly and unstable; the exact likelihood computation with diffusion models is intractable, causing a series of problems when optimizing with existing policy gradient RL methods due to approximation error.

\section{Methods}
In this section, we revisit the KL-regularized objective for diffusion policy optimization, revealing its strengths and limitations.
We then introduce \textit{DIPOLE}, a novel RL framework that decomposes the optimization problem into dichotomous policy learning objectives, thereby enabling stable training and greedy diffusion policy extraction. 

\subsection{KL-Regularized Objective in RL}
\label{sec:whatswrong}


Reinforcement learning with KL regularization is a highly flexible framework that has been widely used in various RL settings, which constrains policy optimization to remain close to a reference policy $\mu$, and has the following general form:
\begin{align}
\label{eq:optim}
\max_{\pi} ~\mathbb{E}_{s \sim d^\pi(s)}\left[\mathbb{E}_{a \sim \pi(a|s)}\left[G(s, a)\right]- \frac{1}{\beta} D_\text{KL}(\pi\left(\cdot | s\right) \| \mu\left(\cdot | s\right))\right],
\end{align}
where $\beta > 0$ is the temperature parameter, and $D_\text{KL}\left(p \| q\right) = \mathbb{E}_{x \sim p}\left[\log \left(p(x)/q(x) \right) \right]$. $G(\cdot)$ is the evaluated return to be maximized, which can either be the reward function $r(s,a)$ as in single-step problems such as LLM RL fine-tuning~\citep{korbak2022rl,shao2024deepseekmath}, or the action-value function $Q^{\pi}(s,a)$ or advantage function $A^{\pi}(s,a)$ as in standard multi-step settings. The specific choice of reference policy $\mu$ gives rise to different RL task settings. For example, setting $\mu$ to be the uniform distribution, we recover maximum entropy RL as in SAC~\citep{haarnoja2018soft}; setting $\mu$ to be the behavior policy in offline datasets $\mathcal{D}$, we obtain many offline RL algorithms~\citep{wu2019behavior,xu2023offline,gargextreme,maoodice}; lastly, setting $\mu$ to be a pre-trained policy $\pi_0$ or the recently updated policy $\pi_{k-1}$, it corresponds to offline-to-online fine-tuning scenarios~\citep{nakamoto2023cal, li2023proto} or trust-region style online policy optimization~\citep{schulman2015trust}.

The flexibility of the KL-regularized RL framework makes it an ideal choice for diffusion policy optimization. The best part is, it is known that the optimization objective in Eq.~(\ref{eq:optim}) also provides a closed-form solution for optimal policy $\pi^{\star}$ as follows~\citep{nair2020awac}:
\begin{equation}
\label{eq:closedform}
    \pi^{\star}(a\mid s) \propto \mu(a\mid s)\cdot \exp\!\left(\beta G(s,a) \right),
\end{equation}
Intuitively, the optimal policy is a reweighted version of the reference policy $\mu$, in which actions with higher values are assigned greater probability density. As shown in several existing studies~~\citep{kang2023efficient, zheng2024safe}, if given a pre-trained diffusion policy $\epsilon_\theta$ trained with Eq.~(\ref{eq:diffusion}) as the reference policy $\mu$, we can further optimize it with the weighted diffusion loss in Lemma~\ref{lemma:fisor} to extract the optimal diffusion policy $\epsilon^{\star}$.
\begin{restatable}{lemma}{lemmafisor}
\label{lemma:fisor}
We can generate optimal $a \sim \pi^{\star}(a|s)$ in Eq.~(\ref{eq:closedform}) by optimizing the weighted diffusion loss in Eq.~(\ref{eq:weighted_diffusion}) and solving the diffusion reverse process with obtained $\epsilon^{\star}$~\citep{zheng2024safe}.
\begin{equation}
\label{eq:weighted_diffusion}
\mathcal{L}_{\epsilon_{\theta}}=\mathbb{E}_{t\sim U[0,1],\epsilon\sim\mathcal{N}(\mathbf{0},\mathbf{I}),(s,a)\sim \mathcal{D}}\left[\exp\!\left(\beta G(s,a) \right)\cdot \left\Arrowvert \epsilon-\epsilon_{\theta}\left(a_t,s,t\right) \right \Arrowvert^2 \right].
\end{equation}
\end{restatable}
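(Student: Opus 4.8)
The plan is to reduce the weighted objective in Eq.~(\ref{eq:weighted_diffusion}) to an ordinary, \emph{unweighted} denoising score-matching loss, but taken against the reweighted target $\pi^{\star}$ rather than against the reference $\mu$. Once this reduction is in place, the claim follows from the standard diffusion result that the minimizer of the unweighted conditional loss recovers the (scaled) score of the target conditional, and that feeding this score into the reverse process generates exact samples from that target. So the only genuinely new content of the proof is a change-of-measure that absorbs the exponential weight into the sampling distribution.

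First I would fix a state $s$ and isolate the inner expectation over $a$. Writing $w(s,a)=\exp(\beta G(s,a))$, the weight is strictly positive, so it can be absorbed into the sampling density:
\begin{equation*}
\mathbb{E}_{a\sim\mu(\cdot|s)}\!\left[w(s,a)\,f(a)\right]=Z(s)\,\mathbb{E}_{a\sim\pi^{\star}(\cdot|s)}\!\left[f(a)\right],\qquad Z(s)=\int \mu(a|s)\,w(s,a)\,da,
\end{equation*}
where the reweighted density $\mu(a|s)\,w(s,a)/Z(s)$ is exactly $\pi^{\star}(a|s)$ from Eq.~(\ref{eq:closedform}). Because $w>0$, the target $\pi^{\star}$ is absolutely continuous with respect to $\mu$ and shares its support, so no probability mass is lost in the reweighting.

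Next I would substitute this identity into Eq.~(\ref{eq:weighted_diffusion}), yielding
\begin{equation*}
\mathcal{L}_{\epsilon_{\theta}}=\mathbb{E}_{s}\!\left[Z(s)\cdot\mathbb{E}_{t,\,\epsilon,\,a\sim\pi^{\star}(\cdot|s)}\!\left[\left\Arrowvert \epsilon-\epsilon_{\theta}(a_t,s,t)\right\Arrowvert^2\right]\right],
\end{equation*}
with $s$ drawn from the state marginal of $\mathcal{D}$. Since $Z(s)>0$ is independent of $\theta$ and the network $\epsilon_{\theta}(a_t,s,t)$ can be optimized independently for each $s$, the per-state minimizer is unaffected by the factor $Z(s)$ (and by the induced reweighting of the outer state expectation). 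For each fixed $s$ we are therefore minimizing the unweighted diffusion loss of Eq.~(\ref{eq:diffusion}) with the data conditional replaced by $\pi^{\star}(\cdot|s)$. Invoking the standard denoising score-matching identity (Tweedie's formula), the minimizer satisfies $\epsilon^{\star}(a_t,s,t)=-\sigma_t\nabla_{a_t}\log p^{\star}_t(a_t|s)$, where $p^{\star}_t$ is the forward-diffused marginal of $\pi^{\star}(\cdot|s)$; running the reverse process with this score then produces exact samples $a\sim\pi^{\star}(\cdot|s)$.

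The main obstacle is not the reweighting itself, which is a short change of measure, but making the two supporting facts precise: (i) integrability, namely $Z(s)<\infty$, which requires $\exp(\beta G)$ to be $\mu$-integrable (guaranteed when $G$ is bounded, as for a normalized advantage); and (ii) the score-to-samples step, which presumes enough expressiveness of $\epsilon_{\theta}$ that the pointwise minimizer is attained, together with the usual regularity ensuring the learned score induces the correct reverse-time marginals. I would state (i) as a standing assumption on $G$ and defer (ii) to the established diffusion sampling guarantees.
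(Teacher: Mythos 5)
Your proof is correct and takes essentially the same route as the paper's (which defers to the cited \citep{zheng2024safe}): a change of measure absorbing the strictly positive weight $\exp\!\left(\beta G(s,a)\right)$ into the sampling distribution, which reduces Eq.~(\ref{eq:weighted_diffusion}) to the unweighted denoising loss of Eq.~(\ref{eq:diffusion}) with data conditional $\pi^{\star}(\cdot\mid s)$, after which the standard denoising score-matching identity and reverse-process sampling guarantee finish the argument. Your explicit handling of the two side conditions --- $\mu$-integrability of $\exp(\beta G)$ so that $Z(s)<\infty$, and the irrelevance of the $\theta$-independent factor $Z(s)$ to the per-state minimizer --- makes the statement slightly more careful than the cited treatment, but introduces no new ideas.
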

Compared to diffusion-based RL methods that rely on unstable reward/value maximization~\citep{xu2023imagereward, clark2023directly} or biased likelihood approximation~\citep{black2023training, ren2024diffusion}, Eq.~(\ref{eq:weighted_diffusion}) offers a simple and scalable training scheme for policy optimization, requiring only the addition of a weighted term to the base diffusion learning objective in Eq.~(\ref{eq:diffusion}). Despite its simplicity, we do not observe the adoption of this scheme in many recent diffusion-based RL methods. Why is that? Actually, there exist several limitations for this exp-weighted regression scheme:
\begin{itemize}[leftmargin=*] 
\item \textit{Optimality-stability trade-off.} As the exponential function $\exp(\cdot)$ grows rapidly, a high-quality action with a large $G(s,a)$ value can lead to an extremely large weight term when $\beta$ is large, causing the explosion of learning loss and destabilizing the training process (illustrated in Figure~\ref{fig:pipeline}).
In practice, many methods mitigate this issue by either using a small $\beta$ or 
clipping the weighting term~\citep{gargextreme,xu2023offline,hansen2023idql}. However, these treatments compromise the optimality of the extracted policy. 
\item \textit{Inefficient learning.} The training loss becomes dominated by a small number of high-return samples, which is inefficient for policy optimization~\citep{park2024value}. Additionally, poor-quality samples still retain positive weight, which can adversely affect policy learning. The constrained optimization objective also makes the learning process highly dependent on the quality of the reference policy $\mu$, thereby limiting the potential for greedy policy optimization.
\end{itemize}



\begin{figure}[t]
\begin{center}
\includegraphics[width=1\textwidth]{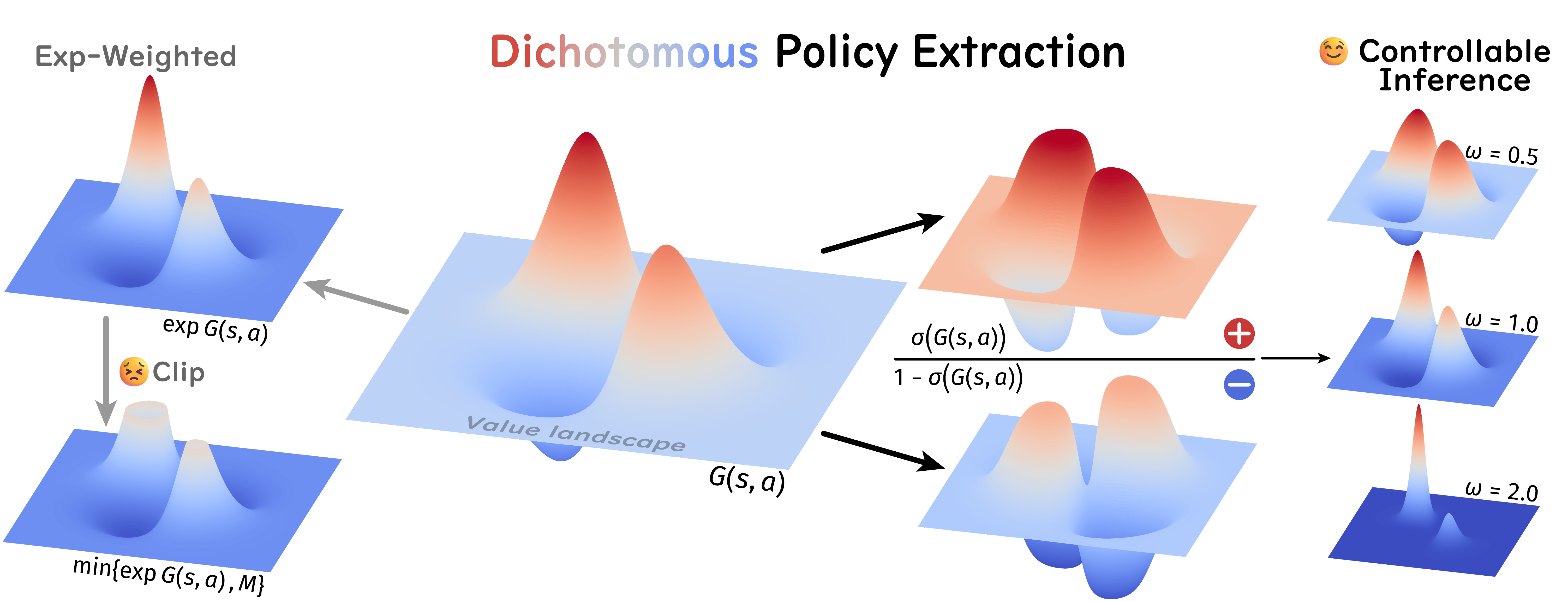}
\end{center}
\caption{ \small Illustration of the policy weighting scheme in \textit{DIPOLE}. Based on our greedified policy optimization objective, the regression weight of the optimal policy can be decomposed into a pair of dichotomous terms, and the greediness for reward/value maximization can be flexibly controlled by $\omega$.}
\label{fig:pipeline}
\vspace{-10pt}
\end{figure}

\subsection{Dichotomous Diffusion Policy Improvement}\label{sec:dipole}
To address the drawbacks of the previous weighted regression scheme while preserving its simplicity and scalability, we instead consider a greedified KL-regularized RL objective.

\textbf{Greedified policy optimization.} We begin by formulating a greedier learning objective compared to Eq.~(\ref{eq:optim}), presented in Eq.~(\ref{eq:new_optim}). At first glance, it appears to be complex; however, as we will show in the later derivation, its resulting closed-form optimal solution can lead to a remarkably elegant form for effective diffusion policy optimization.
\begin{align}
\label{eq:new_optim}
{\max_{\pi} ~\mathbb{E}_{s \sim d^\pi(s)}\left[\mathbb{E}_{a \sim \pi(a|s)}\left[G(s, a)\right]- \frac{1}{\textcolor{RED}{\omega}\beta} D_\text{KL}\left(\pi\left(\cdot | s\right) \| \mu\left(\cdot | s\right)\cdot \textcolor{RED}{\frac{\sigma\left(\beta G(s,a)\right)}{Z(s)}}\right)\right]},
\end{align}
In this revised objective, we instead regularize policy $\pi$ with a greedified, value-aware reference policy weighted by $\textcolor{RED}{\sigma\left(\beta G(s,a)\right)/Z(s)}$, where $Z(s)$ denotes the normalization factor and $\sigma(x)=1/(1+\exp(-x))$ is the sigmoid function. This design shares a similar spirit with some offline RL methods that enhance policy performance by regularizing towards a greedier behavior policy or reward-weighted datasets~\citep{singh2022offline, hong2023beyond, xu2025optimal}.
It is worth noting that we use a bounded and smooth sigmoid function as the weighting function, which greedily assigns high weights to high-return samples while avoiding numerical instability.
Moreover, we introduce a new hyperparameter $\textcolor{RED}{\omega}$, termed the greediness factor, which provides an additional interface for adjusting the greediness of policy extraction. We will reveal its role in the later derivation. Based on the optimization objective in Eq.~(\ref{eq:new_optim}), we can get its closed-form solution as follows:
\begin{restatable}{theorem}{closedform}
\label{therom:new_closed_form}
The optimal solution for Eq.~(\ref{eq:new_optim}) satisfies:
\begin{equation}
\label{eq:new_closed_form}
    \pi^{\star}(a\mid s) \propto \mu(a\mid s)\cdot \textcolor{RED}{\sigma\left(\beta G(s,a)\right)}\cdot \exp\!\left(\textcolor{RED}{\omega}\cdot\beta G(s,a) \right).
\end{equation}
\end{restatable}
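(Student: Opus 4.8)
The plan is to exploit the fact that the objective in Eq.~(\ref{eq:new_optim}) decouples across states: for each fixed $s$ the normalization constraint $\int \pi(a\mid s)\,da = 1$ couples only the actions at that state, and the outer expectation merely reweights the per-state objectives by the non-negative factor $d^\pi(s)$. I would therefore first reduce the problem to the pointwise maximization, for each $s$, of
\begin{equation}
J_s(\pi) := \mathbb{E}_{a \sim \pi(a\mid s)}\!\left[G(s,a)\right] - \frac{1}{\omega\beta}\, D_\text{KL}\!\left(\pi(\cdot\mid s)\,\big\|\,\tilde\mu(\cdot\mid s)\right), \quad \tilde\mu(a\mid s) := \frac{\mu(a\mid s)\,\sigma(\beta G(s,a))}{Z(s)},
\end{equation}
over probability densities $\pi(\cdot\mid s)$. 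Here $Z(s) = \int \mu(a\mid s)\,\sigma(\beta G(s,a))\,da$ is precisely the normalizer that turns $\tilde\mu$ into a valid distribution; since $\sigma(\cdot)\in(0,1)$ we have $0 < Z(s) < 1$, so the reference is well defined and the $D_\text{KL}$ term is meaningful whenever $\pi$ is absolutely continuous with respect to $\mu$.

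The key step is a completion-of-the-KL argument that sidesteps explicit Lagrange multipliers. Multiplying $J_s$ by the positive constant $\omega\beta$ and folding the linear return term inside the logarithm gives
\begin{equation}
\omega\beta\, J_s(\pi) = \int \pi(a\mid s)\,\log\frac{\tilde\mu(a\mid s)\,\exp\!\left(\omega\beta G(s,a)\right)}{\pi(a\mid s)}\,da = \log C(s) - D_\text{KL}\!\left(\pi(\cdot\mid s)\,\big\|\,p^\star(\cdot\mid s)\right),
\end{equation}
where $p^\star(a\mid s) := \tilde\mu(a\mid s)\exp(\omega\beta G(s,a))/C(s)$ and $C(s) := \int \tilde\mu(a\mid s)\exp(\omega\beta G(s,a))\,da$ normalizes it. Since $\log C(s)$ is independent of $\pi$ and $D_\text{KL}(\cdot\|\cdot)\ge 0$ with equality iff its arguments coincide, $J_s$ is maximized exactly at $\pi^\star(\cdot\mid s) = p^\star(\cdot\mid s)$. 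Substituting the definition of $\tilde\mu$ and absorbing the $a$-independent factors $Z(s)$ and $C(s)$ into the proportionality constant then yields
\begin{equation}
\pi^\star(a\mid s) \propto \mu(a\mid s)\cdot\sigma\!\left(\beta G(s,a)\right)\cdot\exp\!\left(\omega\beta G(s,a)\right),
\end{equation}
which is exactly Eq.~(\ref{eq:new_closed_form}).

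The main obstacle I expect is the dependence of the state-visitation distribution $d^\pi(s)$ on the policy being optimized, which in principle obstructs a naive state-by-state decomposition. I would handle this in the same manner as the standard KL-regularized derivation behind Eq.~(\ref{eq:closedform})~\citep{nair2020awac}: the per-state maximizer $p^\star(\cdot\mid s)$ has an identical closed form at every $s$ and the weights $d^\pi(s)$ are non-negative, so the policy attaining the pointwise optimum at all states simultaneously maximizes the reweighted sum irrespective of the induced weighting, hence is globally optimal. The only remaining technical care is ensuring integrability of the log-ratios and matching supports; both hold because $\sigma(\beta G)$ and $\exp(\omega\beta G)$ are strictly positive, so $\pi^\star$ inherits exactly the support of $\mu$ and the KL terms stay finite.
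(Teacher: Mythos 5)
Your proof is correct, and it takes a genuinely different route from the paper's. The paper proceeds by calculus of variations: it writes the Lagrangian of Eq.~(\ref{eq:new_optim}) with explicit multipliers $\alpha_s$ for normalization and $\gamma_{s,a}$ for non-negativity, differentiates with respect to $\pi(a\mid s)$, sets the derivative to zero, and reads off $\pi^\star(a\mid s)\propto \mu(a\mid s)\,\sigma(\beta G(s,a))\exp(\omega\beta G(s,a))$ after absorbing the multiplier- and $Z(s)$-dependent factors into the proportionality constant. Your completion-of-the-KL argument instead rewrites $\omega\beta J_s(\pi)=\log C(s)-D_\text{KL}(\pi(\cdot\mid s)\,\|\,p^\star(\cdot\mid s))$ and invokes non-negativity of the KL divergence. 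This buys you something the paper's proof does not deliver as stated: first-order stationarity of the Lagrangian only identifies a critical point (the paper never checks concavity or handles the inequality multipliers' complementary slackness), whereas your argument gives \emph{global} optimality and uniqueness (almost everywhere) in one stroke, with no multipliers to manage. The paper's route, in exchange, is the more conventional template and extends more readily when additional functional constraints are added to the program. One shared caveat: both arguments effectively treat the visitation weights as fixed. Your closing justification --- that the pointwise maximizer is globally optimal ``irrespective of the induced weighting'' --- is not airtight as literally stated, since for a competitor $\pi$ one only gets $\mathbb{E}_{s\sim d^\pi}[J_s(\pi)]\le \mathbb{E}_{s\sim d^\pi}[J_s(\pi^\star)]$, and the right-hand side is averaged under $d^\pi$ rather than $d^{\pi^\star}$, so it need not equal the objective value of $\pi^\star$ when $J_s(\pi^\star)=\frac{1}{\omega\beta}\log C(s)$ varies in sign across states. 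But the paper's own proof makes exactly the same simplification (it differentiates while treating $d^\pi(s)$ as a constant and even divides the multipliers by it), so your treatment is at the same level of rigor as the original, and you are the only one of the two to flag the issue explicitly.
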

Proof of this theorem can be found in Appendix \ref{appendix:theoretical}. The optimal solution corresponds to a value-aware reference policy with a special weighting scheme, where both $\beta$ and the greediness factor $\omega$ control the level of greediness in the resulting policy. 
Next, we will show how this solution enables natural decomposition into a pair of dichotomous policies.

\textbf{Dichotomous policy extraction.} Leveraging the property of the sigmoid function, it's easy to show:
\begin{align}
\label{eq:decompose}
&\pi^{\star}(a \mid s)\propto \mu(a\mid s)\cdot \textcolor{RED}{\sigma\left(\beta G(s,a)\right)}\cdot \exp\!\left(\textcolor{RED}{\omega}\cdot\beta G(s,a) \right) \nonumber \\
\Leftrightarrow \quad & \pi^{\star}(a \mid s) \propto \mu(a\mid s)\cdot \sigma\left(\beta G(s,a)\right) \cdot \left(\frac{\sigma\left(\beta G(s,a)\right)}{1-\sigma\left(\beta G(s,a)\right)}\right)^{\omega}  \nonumber \\
\Leftrightarrow \quad &\pi^{\star}(a \mid s) \propto \left[\mu(a\mid s) \cdot \textcolor{RED}{\sigma\left(\beta G(s,a)\right)}\right]^{1+\textcolor{RED}{\omega}}/\left[\mu(a\mid s) \cdot \left(1-\textcolor{RED}{\sigma\left(\beta G(s,a)\right)}\right)\right]^{\textcolor{RED}{\omega}}.
\end{align}
Eq.~(\ref{eq:decompose}) suggests that the optimal policy can actually be expressed as the ratio of two weighted reference policies with distinct exponents and weighting functions. Specifically, we can define a positive policy $\textcolor{myred}{\bm\pi^{+}}$ and a negative policy $\textcolor{myblue}{\bm\pi^{-}}$ as:
 \begin{equation}
\label{eq:dichotomous_policy}
\textcolor{myred}{\bm{\pi^{+}}}(a\mid s) \propto \mu(a\mid s) \cdot \textcolor{RED}{\sigma\left(\beta G(s,a)\right)}, \quad  \textcolor{myblue}{\bm\pi^{-}}(a\mid s) \propto \mu(a\mid s) \cdot \left(1-\textcolor{RED}{\sigma\left(\beta G(s,a)\right)}\right),
\end{equation}
where the positive policy $\textcolor{myred}{\bm\pi^{+}}$ aims to maximize the return and the negative policy $\textcolor{myblue}{\bm\pi^{-}}$ minimizes it. We call $\textcolor{myred}{\bm\pi^{+}}$ and $\textcolor{myblue}{\bm\pi^{-}}$ \textit{dichotomous policies}, as they share similar form but with opposite focuses. With this definition, the optimal policy can be simply expressed as $\pi^*\propto [\textcolor{myred}{\bm\pi^{+}}]^{(1+\omega)} / [\textcolor{myblue}{\bm\pi^{-}}]^{\omega}$.
Careful readers will notice that both $\textcolor{myred}{\bm\pi^{+}}$ and $\textcolor{myblue}{\bm\pi^{-}}$ are weighted by strictly bounded sigmoid weight functions, instead of the unstable and unbounded exponential weight term $\exp(\beta G(s,a))$ in the optimal solution of the original KL-regularized objective Eq.~(\ref{eq:closedform}). This means that the decomposed dichotomous policies can be stably trained, precluding loss explosion as discussed in Section~\ref{sec:whatswrong}. Moreover, as the positive policy $\textcolor{myred}{\bm\pi^{+}}$ prioritizes learning from high-return samples, while the negative policy $\textcolor{myblue}{\bm\pi^{-}}$ prioritizes learning from low-return samples, we can thus simultaneously utilize both good and bad data for policy optimization, completely resolving the issue of being dominated by high-return samples as in exp-weighted regression, and enabling more efficient learning.

Following Lemma~\ref{lemma:fisor}, we can train the positive and negative policies $\textcolor{myred}{\bm\pi^{+}}$ and $\textcolor{myblue}{\bm\pi^{-}}$ using two diffusion models with their bounded sigmoid weight functions, 
parameterized as $\epsilon^+_{\theta_1}$ and $\epsilon^-_{\theta_2}$:
\begin{equation}
\begin{aligned}
\label{eq:dicho_weighted_diffusion}
\mathcal{L}_{\epsilon^+_{\theta_1}}&=\mathbb{E}_{t\sim U[0,1],\epsilon\sim\mathcal{N}(\mathbf{0},\mathbf{I}),(s,a)\sim \mathcal{D}}\left[ \sigmoid \left(\beta G(s,a) \right) \cdot \left\Arrowvert \epsilon-\epsilon^+_{\theta_1}\left(a_t,s,t\right) \right \Arrowvert^2 \right] \\
\mathcal{L}_{\epsilon^-_{\theta_2}}&=\mathbb{E}_{t\sim U[0,1],\epsilon\sim\mathcal{N}(\mathbf{0},\mathbf{I}),(s,a)\sim \mathcal{D}}\left[\left(1 -  \sigmoid \left(\beta G(s,a) \right) \right)\cdot \left\Arrowvert \epsilon-\epsilon^-_{\theta_2}\left(a_t,s,t\right) \right \Arrowvert^2 \right].
\end{aligned}
\end{equation}

\textbf{Controllable generation.} To sample from the optimal policy $\pi^{\star}$, note that based on Eqs.~(\ref{eq:decompose}--\ref{eq:dichotomous_policy}),
\begin{align}
\label{eq:cfg}
&\log\pi^{\star}(a\mid s) = (1+\textcolor{RED}{\omega})\log\textcolor{myred}{\bm{\pi^{+}}}(a\mid s)- \textcolor{RED}{\omega}\log\textcolor{myblue}{\bm{\pi^{-}}}(a\mid s) + \log C \nonumber \\
\Rightarrow \quad
&\nabla_a\log\pi^{\star}(a\mid s) = (1+\textcolor{RED}{\omega})\nabla_a \log\textcolor{myred}{\bm{\pi^{+}}}(a\mid s)- \textcolor{RED}{\omega}\nabla_a \log\textcolor{myblue}{\bm{\pi^{-}}}(a\mid s),
\end{align}
where $C$ is a constant. This shows that the score function of the optimal policy $\pi^{\star}$ can be expressed as a linear combination of scores of the dichotomous policies, weighted by $\omega$. Due to the inherent connection between the score function and the noise predictor in diffusion model~\citep{ho2020denoising}, we can use $\tilde{\epsilon}\left(a_t,s,t\right) = (1+w)\epsilon^+_{\theta_1}\left(a_t,s,t\right)-w\epsilon_{\theta_2}^{-}\left(a_t,s,t\right)$ in the reverse process of diffusion or flow matching for action sampling.

Interestingly, the formulation in Eq.~(\ref{eq:cfg}) is remarkably similar to classifier-free guidance (CFG)~\citep{ho2022cfg}, a popular method for enhanced conditional diffusion generation, which has the form of $\tilde{\epsilon}(x_t, c, t)=(1+\omega)\epsilon_{\theta}(x_t,c,t)-\omega \epsilon_{\theta}(x_t,t)$, where $\epsilon_{\theta}(x_t,c,t)$ is a conditioned version of $\epsilon_{\theta}(x_t,t)$ with conditioning signal $c$. This reveals the inherent connection between our greedified KL-regularized RL objective and the CFG mechanism. Intuitively, our method further strengthens the positive distribution by pushing the negative distribution in the opposite direction, thus enabling flexible control of the optimality level of generated actions with the greediness factor $\textcolor{RED}{\omega}$ (see illustration in Figure~\ref{fig:pipeline}).
Our final formulation also has some similarity with CFGRL~\citep{frans2025diffusion}, which can be perceived as setting $\pi^+ \propto \mu \cdot \mathbb{I}_{A \geq 0}$ and $\pi^- = \mu$ ($A$ is the advantage function). However, their method lacks theoretical backing, and using identical weights for both positive and negative samples limits the greediness of policy optimization, leading to suboptimal performance. 

\subsection{Practical Implementations}
\textbf{Offline and offline-to-online RL.} For standard multi-step RL settings, we can set $G(s,a)$ as the advantage function $A(s,a)$.
In the offline RL setting, the reference policy $\mu$ in Eq.~(\ref{eq:new_optim}) corresponds to the behavior policy $\pi_\beta$ of the offline datasets. In the offline-to-online setting, the reference policy is set as the policy updated in the previous step $\pi_{k-1}$ ($\pi_{0}$ is the offline pre-trained policy). The algorithm pseudocode and additional implementation details are provided in Appendix~\ref{sec:algo} and \ref{sec:detail_rl}. 

\textbf{End-to-end autonomous driving.} We also implement \textit{DIPOLE} to train a large end-to-end autonomous driving model to demonstrate its scalability to solve real-world complex tasks. Specifically, 
we employ a non-reactive pseudo-closed-loop simulation based on real-world datasets for policy training. In this setup, the return $G(s,a)$ is defined by a reward function that evaluates trajectory quality based on safety, progress, and comfort. We employ a vision-language model (Florence-2~\citep{xiao2024florence}) as the encoder and a diffusion action head as the decoder~\citep{zheng2025diffusionplanner}. The model processes images from the left-front, front, and right-front cameras, along with language instructions such as "turn left", "turn right", and "go straight". This architecture results in a 1-billion parameter model, which we name \textit{DP-VLA}, and is pre-trained using imitation learning. Subsequently, two separate LoRA modules are applied to the decoder to construct the positive and negative policies, allowing us to leverage Eq.~(\ref{eq:dicho_weighted_diffusion}) for training. We follow the offline-to-online RL setting to fine-tune the VLA model. Further implementation details are provided in Appendix~\ref{sec:detail_ad}.

\section{Experiments}

\subsection{Experiments on RL Benchmarks}
\textbf{Experimental setup.} We evaluate our approach on two commonly-used benchmarks,  OGBench~\citep{ogbench_park2025} task suite and ExORL~\citep{yarats2022exorl} benchmark. OGBench provides challenging robotic locomotion and manipulation tasks, including complex whole-body humanoid control, maze navigation, and object manipulation. We use the default dataset collected by RND~\citep{burda2018explorationrandomnetworkdistillation}, including tasks in complex high-dimensional state-based domains: Walker, Quadruped, Jaco, and Cheetah. Our evaluation encompasses 30 tasks across 6 domains on OGBench and 9 tasks across 4 domains on ExORL for offline learning, totaling 39 tasks. Finally, we select 4 default tasks across 4 domains on OGBench for offline-to-online validation. Further details are provided in Appendix~\ref{sec:exp}.

\textbf{Baselines.} We use representative baselines across policy types for comprehensive comparison:
\vspace{-5pt}
\begin{itemize}[leftmargin=*]
\item \textit{Gaussian policy.} Standard RL uses Gaussian policies by default. In comparison with standard methods, we select 1) \textit{IQL}~\citep{kostrikov2021offline}: a typical weighted regression offline RL method. 2) \textit{ReBRAC}~\citep{tarasov2023revisiting}: an effective behavior-regularized actor-critic approach incorporates several specific designs tailored for offline learning.

\item \textit{Diffusion/Flow policy.} We also include offline RL baselines built on diffusion or flow policies according to the following learning strategies: 1) \textit{IDQL}~\citep{hansen2023idql} and \textit{IFQL}~\citep{fql_park2025}: both approaches employ expectile regression for value learning and utilize imitation pre-trained diffusion or flow models with rejection sampling during inference. 2) \textit{FQL}~\citep{fql_park2025}: a behavior-regularized actor-critic variant that uses flow policy distillation and shows strong performance on OGBench. 3) \textit{CFGRL}~\citep{frans2025diffusion}: a recently proposed policy improvement framework relies on classifier-free guidance, which uses high-quality actions for conditional policy training and unconditional behavior cloning.
\end{itemize}
\vspace{-5pt}

For the offline RL setting, we compare our approach with \textit{IQL}, \textit{ReBRAC}, \textit{CFGRL}, \textcolor{rebuttle}{\textit{IFQL}}, and \textcolor{rebuttle}{\textit{FQL}} on the ExORL benchmark. We also include a variant, \textit{DIPOLE w/o rs}, which does not use rejection sampling during inference for \textcolor{rebuttle}{clear} comparison. \textcolor{rebuttle}{For \textit{IFQL}, we utilizes the default hyperparameters, and for \textit{FQL}, we select the hyperparameter $\alpha$ reported in previous work~\citep{fql_park2025} with optimal performance in ExORL.} Additionally, we compare with \textit{IQL}, \textit{ReBRAC}, \textit{IDQL}, \textit{IFQL}, and \textit{FQL} on OGBench to demonstrate our method's effectiveness against state-of-the-art approaches \textcolor{rebuttle}{in challenging benchmark}.

\begin{table}[t]
\centering
\captionof{table}{\label{table:exorl_main} \small \textbf{ExORL Results.} We report the average score over 8 random seeds. \textit{DIPOLE} achieves the best performance. (w/o rs: without rejection sampling)}
\resizebox{1\linewidth}{!}{\scriptsize
\begin{tabular}{lcccccccc}
\toprule
& & \multicolumn{2}{c}{Gaussian Policy} & \multicolumn{5}{c}{Diffusion/Flow Policy} \\
\cmidrule(lr){3-4} \cmidrule(lr){5-9}
\textbf{Domain} & \textbf{Task} & IQL & ReBRAC & CFGRL & IFQL & FQL & DIPOLE w/o rs & DIPOLE \\
\midrule
\multirow{3}{*}{Walker}     & stand     & $\textrm{603} \textcolor{lightgray}{\pm \textrm{8}}$ 
                                        & $\textrm{461} \textcolor{lightgray}{\pm \textrm{3}}$ 
                                        & $\textrm{782} \textcolor{lightgray}{\pm \textrm{8}}$ 
                                        & $\textrm{873} \textcolor{lightgray}{\pm \textrm{6}}$
                                        & $\textrm{801} \textcolor{lightgray}{\pm \textrm{4}}$
                                        & $\textrm{793} \textcolor{lightgray}{\pm \textrm{11}}$
                                        & \colorbox{tableblue}{$\textrm{953} \textcolor{lightgray}{\pm \textrm{4}\textcolor{tableblue}{a}}$} 
                                        \\ 
                            & walk      & $\textrm{444} \textcolor{lightgray}{\pm \textrm{4}}$ 
                                        & $\textrm{208} \textcolor{lightgray}{\pm \textrm{6}}$ 
                                        & $\textrm{608} \textcolor{lightgray}{\pm \textrm{32}}$ 
                                        & $\textrm{844} \textcolor{lightgray}{\pm \textrm{11}}$
                                        & $\textrm{755} \textcolor{lightgray}{\pm \textrm{12}}$
                                        & $\textrm{679} \textcolor{lightgray}{\pm \textrm{16}}$
                                        & \colorbox{tableblue}{$\textrm{910} \textcolor{lightgray}{\pm \textrm{5}\textcolor{tableblue}{a}}$} 
                                        \\
                            & run       & $\textrm{247} \textcolor{lightgray}{\pm \textrm{10}}$ 
                                        & $\textrm{98} \textcolor{lightgray}{\pm \textrm{2}}$ 
                                        & $\textrm{282} \textcolor{lightgray}{\pm \textrm{6}}$
                                        & $\textrm{406} \textcolor{lightgray}{\pm \textrm{8}}$
                                        & $\textrm{294} \textcolor{lightgray}{\pm \textrm{11}}$
                                        & $\textrm{256} \textcolor{lightgray}{\pm \textrm{12}}$
                                        & \colorbox{tableblue}{$\textrm{442} \textcolor{lightgray}{\pm \textrm{9}\textcolor{tableblue}{a}}$}
                                        \\
\cmidrule(lr){1-9}
\multirow{2}{*}{Quadruped}  & walk      & $\textrm{776} \textcolor{lightgray}{\pm \textrm{15}}$ 
                                        & $\textrm{344} \textcolor{lightgray}{\pm \textrm{7}}$ 
                                        & $\textrm{762} \textcolor{lightgray}{\pm \textrm{25}}$ 
                                        & $\textrm{883} \textcolor{lightgray}{\pm \textrm{12}}$
                                        & $\textrm{739} \textcolor{lightgray}{\pm \textrm{25}}$
                                        & $\textrm{813} \textcolor{lightgray}{\pm \textrm{21}}$
                                        & \colorbox{tableblue}{$\textrm{928} \textcolor{lightgray}{\pm \textrm{55}}$}
                                        \\
                            & run       & $\textrm{485} \textcolor{lightgray}{\pm \textrm{7}}$ 
                                        & $\textrm{344} \textcolor{lightgray}{\pm \textrm{3}}$ 
                                        & $\textrm{571} \textcolor{lightgray}{\pm \textrm{25}}$
                                        & $\textrm{595} \textcolor{lightgray}{\pm \textrm{18}}$
                                        & $\textrm{503} \textcolor{lightgray}{\pm \textrm{5}}$
                                        & $\textrm{560} \textcolor{lightgray}{\pm \textrm{11}}$
                                        & \colorbox{tableblue}{$\textrm{657} \textcolor{lightgray}{\pm \textrm{10}}$}
                                        \\
\cmidrule(lr){1-9}
\multirow{2}{*}{Cheetah}    & run       & $\textrm{168} \textcolor{lightgray}{\pm \textrm{7}}$ 
                                        & $\textrm{97} \textcolor{lightgray}{\pm \textrm{13}}$ 
                                        & $\textrm{216} \textcolor{lightgray}{\pm \textrm{15}}$
                                        & $\textrm{269} \textcolor{lightgray}{\pm \textrm{16}}$
                                        & $\textrm{222} \textcolor{lightgray}{\pm \textrm{14}}$
                                        & $\textrm{194} \textcolor{lightgray}{\pm \textrm{9}}$
                                        & \colorbox{tableblue}{$\textrm{274} \textcolor{lightgray}{\pm \textrm{12}}$}
                                        \\
                            & run-backward & $\textrm{146} \textcolor{lightgray}{\pm \textrm{8}}$ 
                                        & $\textrm{85} \textcolor{lightgray}{\pm \textrm{4}}$ 
                                        & $\textrm{262} \textcolor{lightgray}{\pm \textrm{26}}$
                                        & $\textrm{310} \textcolor{lightgray}{\pm \textrm{24}}$
                                        & $\textrm{231} \textcolor{lightgray}{\pm \textrm{12}}$
                                        & $\textrm{227} \textcolor{lightgray}{\pm \textrm{7}}$
                                        & \colorbox{tableblue}{$\textrm{350} \textcolor{lightgray}{\pm \textrm{15}}$} 
                                        \\
\cmidrule(lr){1-9}
\multirow{2}{*}{Jaco}       & reach-top-right & $\textrm{33} \textcolor{lightgray}{\pm \textrm{2}}$ 
                                        & $\textrm{38} \textcolor{lightgray}{\pm \textrm{13}}$ 
                                        & $\textrm{72} \textcolor{lightgray}{\pm \textrm{6}}$ 
                                        & $\textrm{193} \textcolor{lightgray}{\pm \textrm{9}}$
                                        & \colorbox{tableblue}{$\textrm{224} \textcolor{lightgray}{\pm \textrm{17}}$}
                                        & $\textrm{84} \textcolor{lightgray}{\pm \textrm{5}}$
                                        & $\textrm{117} \textcolor{lightgray}{\pm \textrm{18}}$ 
                                        \\
                            & reach-top-left & $\textrm{30} \textcolor{lightgray}{\pm \textrm{8}}$ 
                                        & $\textrm{59} \textcolor{lightgray}{\pm \textrm{5}}$ 
                                        & $\textrm{46} \textcolor{lightgray}{\pm \textrm{6}}$ 
                                        & $\textrm{181} \textcolor{lightgray}{\pm \textrm{11}}$
                                        & \colorbox{tableblue}{$\textrm{222} \textcolor{lightgray}{\pm \textrm{42}}$}
                                        & $\textrm{63} \textcolor{lightgray}{\pm \textrm{8}}$
                                        & $\textrm{110} \textcolor{lightgray}{\pm \textrm{12}\textcolor{tableblue}{a}}$
                                        \\
\bottomrule
\end{tabular}}
\end{table}

\begin{table}[t]
\centering
\caption{\label{table:ogbench_main} \small \textbf{OGBench Results.} We report the aggregate score on all single tasks for each category, averaging over 8 random seeds. DIPOLE achieves best or near-best performance against other baselines across 6 challenging task categories. See appendix \ref{sec:exp} for full results.}
\resizebox{1\linewidth}{!}{\scriptsize
\begin{tabular}{lccccccc}
\toprule
& \multicolumn{2}{c}{Gaussian Policy} & \multicolumn{4}{c}{Diffusion/Flow Policy} \\
\cmidrule(lr){2-3} \cmidrule(lr){4-7}
\textbf{Task Category} & IQL & ReBRAC & IDQL & IFQL & FQL & DIPOLE \\
\midrule
humanoidmaze-medium-navigate (5 tasks)  & $\textrm{33} \textcolor{lightgray}{\pm \textrm{2}}$ 
                                        & $\textrm{2} \textcolor{lightgray}{\pm \textrm{8}}$ 
                                        & $\textrm{1} \textcolor{lightgray}{\pm \textrm{0}}$ 
                                        & $\textrm{60} \textcolor{lightgray}{\pm \textrm{14}}$ 
                                        & $\textrm{58} \textcolor{lightgray}{\pm \textrm{5}}$ 
                                        & \colorbox{tableblue}{$\textrm{68} \textcolor{lightgray}{\pm \textrm{3}}$} 
                                        \\
humanoidmaze-large-navigate (5 tasks)   & $\textrm{2} \textcolor{lightgray}{\pm \textrm{1}}$ 
                                        & $\textrm{2} \textcolor{lightgray}{\pm \textrm{1}}$ 
                                        & $\textrm{1} \textcolor{lightgray}{\pm \textrm{0}}$ 
                                        & \colorbox{tableblue}{$\textrm{11} \textcolor{lightgray}{\pm \textrm{2}}$} 
                                        & $\textrm{4} \textcolor{lightgray}{\pm \textrm{2}}$ 
                                        & $\textrm{6} \textcolor{lightgray}{\pm \textrm{2}}$ 
                                        \\
antsoccer-arena-navigate (5 tasks)      & $\textrm{8} \textcolor{lightgray}{\pm \textrm{2}}$ 
                                        & $\textrm{0} \textcolor{lightgray}{\pm \textrm{0}}$ 
                                        & $\textrm{12} \textcolor{lightgray}{\pm \textrm{4}}$ 
                                        & $\textrm{33} \textcolor{lightgray}{\pm \textrm{6}}$ 
                                        & \colorbox{tableblue}{$\textrm{60} \textcolor{lightgray}{\pm \textrm{2}}$} 
                                        & \colorbox{tableblue}{$\textrm{57} \textcolor{lightgray}{\pm \textrm{7}}$} 
                                        \\
cube-single-play (5 tasks)              & $\textrm{83} \textcolor{lightgray}{\pm \textrm{3}}$ 
                                        & $\textrm{91} \textcolor{lightgray}{\pm \textrm{2}}$ 
                                        & \colorbox{tableblue}{$\textrm{95} \textcolor{lightgray}{\pm \textrm{2}}$} 
                                        & $\textrm{79} \textcolor{lightgray}{\pm \textrm{2}}$ 
                                        & \colorbox{tableblue}{$\textrm{96} \textcolor{lightgray}{\pm \textrm{1}}$} 
                                        & \colorbox{tableblue}{$\textrm{97} \textcolor{lightgray}{\pm \textrm{2}}$}
                                        \\
cube-double-play (5 tasks)              & $\textrm{7} \textcolor{lightgray}{\pm \textrm{1}}$ 
                                        & $\textrm{12} \textcolor{lightgray}{\pm \textrm{1}}$ 
                                        & $\textrm{15} \textcolor{lightgray}{\pm \textrm{6}}$ 
                                        & $\textrm{14} \textcolor{lightgray}{\pm \textrm{3}}$ 
                                        & $\textrm{29} \textcolor{lightgray}{\pm \textrm{2}}$ 
                                        & \colorbox{tableblue}{$\textrm{44} \textcolor{lightgray}{\pm \textrm{7}}$} 
                                        \\
scene-play (5 tasks)                    & $\textrm{28} \textcolor{lightgray}{\pm \textrm{1}}$ 
                                        & $\textrm{41} \textcolor{lightgray}{\pm \textrm{3}}$ 
                                        & $\textrm{46} \textcolor{lightgray}{\pm \textrm{3}}$ 
                                        & $\textrm{30} \textcolor{lightgray}{\pm \textrm{3}}$ 
                                        & $\textrm{56} \textcolor{lightgray}{\pm \textrm{2}}$ 
                                        & \colorbox{tableblue}{$\textrm{60} \textcolor{lightgray}{\pm \textrm{2}}$} 
                                        \\
\bottomrule
\end{tabular}}
\vspace{-10pt}
\end{table}

\textbf{Evaluation results.} We evaluate the performance of each method after a fixed number of offline updates for the offline RL setting. Specifically, we report the average return on ExORL and the success rate on OGBench, following the standard evaluation assessment methods ~\citep{yarats2022exorl,ogbench_park2025}. For the offline-to-online evaluation, we assess the final performance on a fixed number of online updates following the formal offline pretraining stage. All evaluations are averaging over 8 random seeds, with $\pm$ indicating standard deviations in tables. We present our results by answering the following questions:

\begin{itemize}[leftmargin=*]
\item \textit{Can DIPOLE outperform prior SOTA RL algorithms in offline setting?} Table~\ref{table:exorl_main} reports per-task comparison results on ExORL. \textit{DIPOLE} outperforms other baselines in most domains, indicating its capability to fully utilize valuable data in dataset. Specifically, \textit{DIPOLE} fully surpasses \textit{IQL}, indicating its strong improvement over the Gaussian policy-based weighted regression method. Furthermore, \textit{DIPOLE w/o rs} demonstrates better performance compared to \textit{CFGRL}, highlighting the importance of our design for achieving more greedy policy optimization. Finally, Table~\ref{table:ogbench_main} summarizes the aggregate benchmarking results on OGBench. In most task categories, \textit{DIPOLE} achieves better performance compared to other baselines, demonstrating its strong capability in solving challenging long-horizon tasks. These results confirm that weighted regression can effectively achieve greedy policy extraction across robotic locomotion and manipulation RL tasks.
\item \textit{How does DIPOLE perform with online finetuning?} Table~\ref{table:ogbench_o2o_main} reports the exact performance variation after 1M of online updates. We demonstrate that our method can be successfully applied to online fine-tuning settings. Compared to \textit{IFQL}, it achieves a higher performance upper bound. When compared to the direct value maximization approach in \textit{FQL}, our method shows competitive performance, demonstrating the effectiveness of our design for achieving both greedy and stable policy optimization. Moreover, we provide pixel-based online fine-tuning results in end-to-end autonomous driving later, further demonstrating the effectiveness of our approach.
\end{itemize}
Moreover, we refer to Appendix \ref{sec:ablation} for ablation studies.

\begin{table}[t]
\centering
\caption{\label{table:ogbench_o2o_main} \small \textbf{OGBench Offline-to-Online Results.} We report the score on the default task for each category, averaging over 8 random seeds. (humanoidmaze-m: humanoidmaze-medium-navigate)}
\resizebox{1\linewidth}{!}{\scriptsize
\begin{tabular}{lcccccc}
\toprule
& \multicolumn{2}{c}{Gaussian Policy} & \multicolumn{3}{c}{Diffusion/Flow Policy} \\
\cmidrule(lr){2-3} \cmidrule(lr){4-6}
\textbf{Task Category} & IQL & ReBRAC & IFQL & FQL & DIPOLE \\
\midrule
humanoidmaze-m   & $\textrm{21} \textcolor{lightgray}{\pm \textrm{13}} \rightarrow \textrm{16} \textcolor{lightgray}{\pm \textrm{8}}$ 
                                & $\textrm{16} \textcolor{lightgray}{\pm \textrm{20}} \rightarrow \textrm{1} \textcolor{lightgray}{\pm \textrm{1}}$ 
                                & $\textrm{56} \textcolor{lightgray}{\pm \textrm{35}} \rightarrow \textrm{82} \textcolor{lightgray}{\pm \textrm{20}}$ 
                                & $\textrm{12} \textcolor{lightgray}{\pm \textrm{7}} \rightarrow \textrm{22} \textcolor{lightgray}{\pm \textrm{12}}$ 
                                & $\textrm{61} \textcolor{lightgray}{\pm \textrm{10}} \rightarrow \textrm{97} \textcolor{lightgray}{\pm \textrm{2}}$  
                                \\
antsoccer-arena        & $\textrm{2} \textcolor{lightgray}{\pm \textrm{1}} \rightarrow \textrm{0} \textcolor{lightgray}{\pm \textrm{0}}$ 
                                & $\textrm{0} \textcolor{lightgray}{\pm \textrm{0}} \rightarrow \textrm{0} \textcolor{lightgray}{\pm \textrm{0}}$ 
                                & $\textrm{26} \textcolor{lightgray}{\pm \textrm{15}} \rightarrow \textrm{39} \textcolor{lightgray}{\pm \textrm{10}}$ 
                                & $\textrm{28} \textcolor{lightgray}{\pm \textrm{8}} \rightarrow \textrm{86} \textcolor{lightgray}{\pm \textrm{5}}$  
                                & $\textrm{43} \textcolor{lightgray}{\pm \textrm{4}} \rightarrow \textrm{90} \textcolor{lightgray}{\pm \textrm{3}}$  
                                \\
cube-double                & $\textrm{0} \textcolor{lightgray}{\pm \textrm{1}} \rightarrow \textrm{0} \textcolor{lightgray}{\pm \textrm{0}}$ 
                                & $\textrm{6} \textcolor{lightgray}{\pm \textrm{5}} \rightarrow \textrm{28} \textcolor{lightgray}{\pm \textrm{28}}$ 
                                & $\textrm{12} \textcolor{lightgray}{\pm \textrm{9}} \rightarrow \textrm{40} \textcolor{lightgray}{\pm \textrm{5}}$ 
                                & $\textrm{40} \textcolor{lightgray}{\pm \textrm{11}} \rightarrow \textrm{92} \textcolor{lightgray}{\pm \textrm{3}}$ 
                                & $\textrm{41} \textcolor{lightgray}{\pm \textrm{6}} \rightarrow \textrm{89} \textcolor{lightgray}{\pm \textrm{10}}$ 
                                \\
scene                      & $\textrm{14} \textcolor{lightgray}{\pm \textrm{11}} \rightarrow \textrm{10} \textcolor{lightgray}{\pm \textrm{9}}$ 
                                & $\textrm{55} \textcolor{lightgray}{\pm \textrm{10}} \rightarrow \textrm{100} \textcolor{lightgray}{\pm \textrm{0}}$ 
                                & $\textrm{0} \textcolor{lightgray}{\pm \textrm{1}} \rightarrow \textrm{60} \textcolor{lightgray}{\pm \textrm{39}}$ 
                                & $\textrm{82} \textcolor{lightgray}{\pm \textrm{11}} \rightarrow \textrm{100} \textcolor{lightgray}{\pm \textrm{1}}$ 
                                & $\textrm{97} \textcolor{lightgray}{\pm \textrm{4}} \rightarrow \textrm{100} \textcolor{lightgray}{\pm \textrm{0}}$  
                                \\
\bottomrule
\end{tabular}}
\vspace{-5pt}
\end{table}

\subsection{Experiments on Autonomous Driving Benchmark}

\paragraph{Experimental setup.} Our method is evaluated on the large-scale real-world autonomous driving benchmark NAVSIM~\citep{Dauner2024NEURIPS} using closed-loop assessment. Following the official evaluation protocol, we report the PDM score (higher indicates better performance), which aggregates five key metrics: \emph{NC} (no-collision rate), \emph{DAC} (drivable area compliance), \emph{TTC} (time-to-collision safety), \emph{Comfort} (acceleration/jerk constraints), and \emph{EP} (ego progress). All methods are tested under the official closed‑loop simulator, and results are averaged over the public test split. We also consider an RL application scenario where RL can be applied in human take-over situations or complex environments lacking ground-truth supervision. To address this, we provide a variant of our model trained on the test split without using any ground-truth.

\begin{table}[t]
\centering
\caption{\small \textbf{NAVSIM Closed-Loop Results}. We scale up \textit{DIPOLE} to a large VLA model, demonstrating its potential for real-world applications. (navtrain/navtest represent different data splits used for trajectory rollout)}
\resizebox{1\linewidth}{!}{\scriptsize
\begin{tabular}{lccccccc}
\toprule

\textbf{Method} & Input & NC↑ & DAC↑ & TTC↑ & Comf.↑ & EP↑ & {\textbf{PDMS↑}} \\
\midrule
Constant Velocity & -  &68.0 &57.8 &50.0 &100 &19.4 & 20.6 \\
Ego Status MLP & -  &93.0 & 77.3 &83.6 & 100 &62.8 &65.6 \\
UniAD                    & Cam     & 97.8  & 91.9  & 92.9  & 100.0 & 78.8  & 83.4  \\
PARA-Drive             & Cam     & 97.9  & 92.4  & 93.0  & 99.8  & 79.3  & 84.0  \\
LFT & Cam & 97.4 &92.8 & 92.4 &100 &79.0 & 83.8\\
Transfuser          & Cam \& Lidar & 97.7  & 92.8  & 92.8  & 100.0 & 79.2  & 84.0  \\

Hydra-MDP              & Cam \& Lidar & 98.3  & 96.0  & 94.6  & 100.0 & 78.7  & 86.5  \\

DP-VLA (ours)  & Cam & 98.0  & 97.0  & 94.3  & 100.0 & 82.5  & \colorbox{tableblue}{88.3} \\
\midrule
DP-VLA w/ DIPOLE navtrain (ours) & Cam & 98.2  & 98.0  & 95.2  & 100.0 & 83.6  & 89.7 \\ 
\textcolor{rebuttle}{DP-VLA w/ DPPO navtest} & \textcolor{rebuttle}{Cam} & \textcolor{rebuttle}{97.9}  & \textcolor{rebuttle}{97.6}  & \textcolor{rebuttle}{94.1}  & \textcolor{rebuttle}{100.0} & \textcolor{rebuttle}{83.5}  & \textcolor{rebuttle}{89.0} \\
DP-VLA w/ DIPOLE navtest (ours) & Cam & 99.2  & 98.7  & 95.6  & 99.8 & 94.2  & \colorbox{tableblue}{94.8} \\ 
\bottomrule
\end{tabular}}
\vspace{-5pt}
\label{table:navsim}
\end{table}

\begin{figure*}[t]
	\begin{minipage}[t]{\linewidth}
		\centering
		\includegraphics[width=0.24\textwidth]{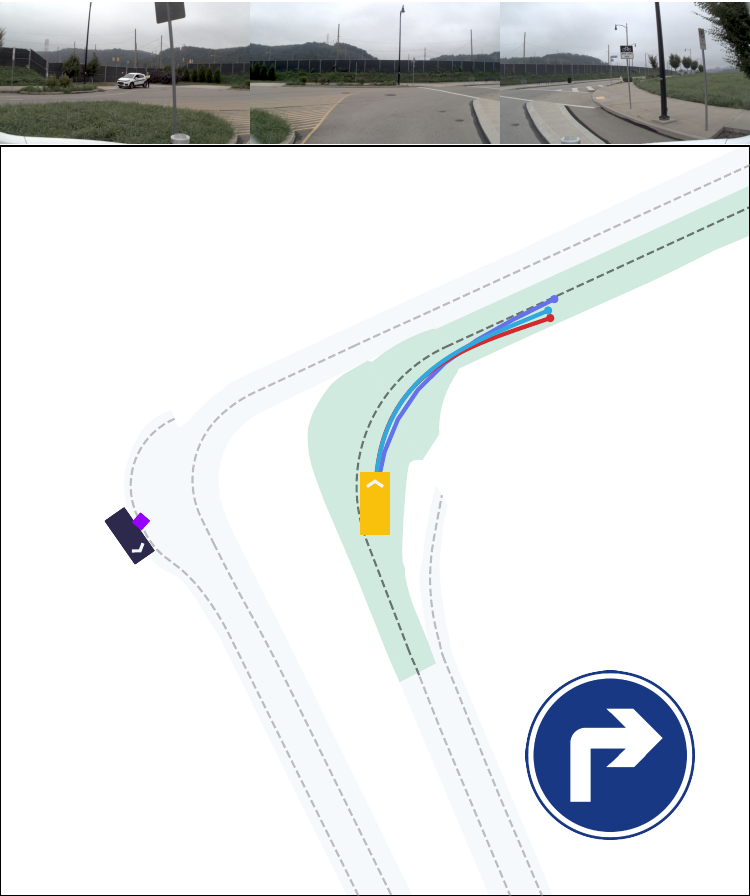}
		\includegraphics[width=0.24\textwidth]{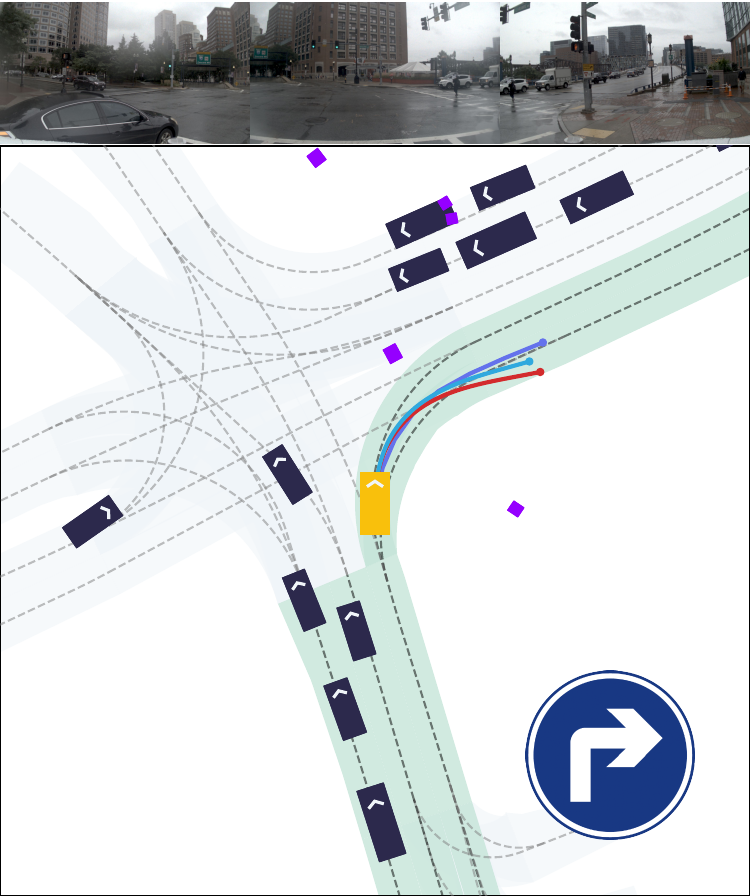}
		\includegraphics[width=0.24\textwidth]{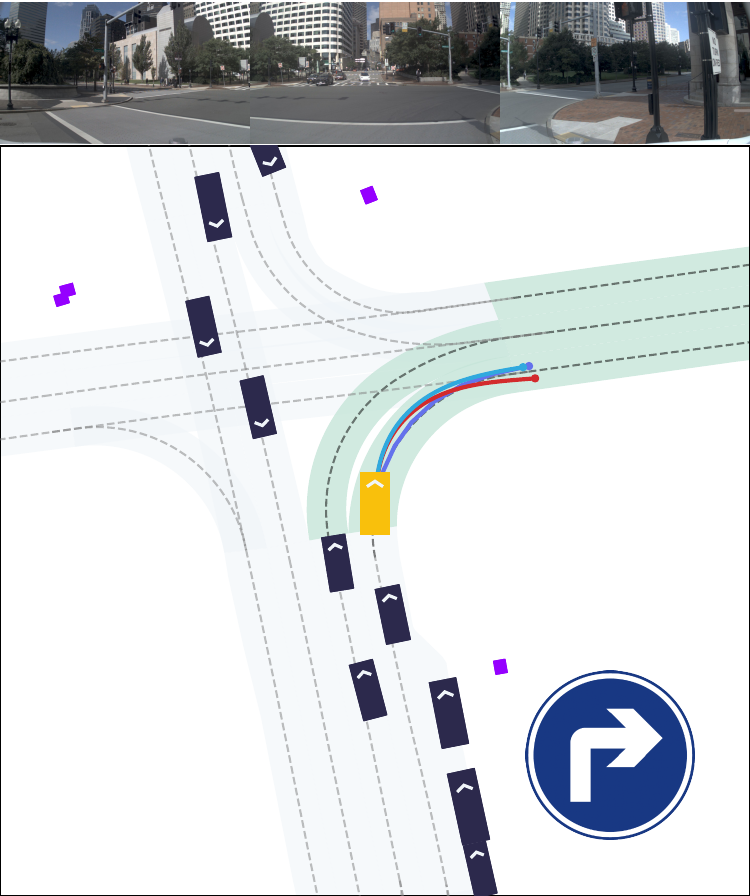}
		\includegraphics[width=0.24\textwidth]{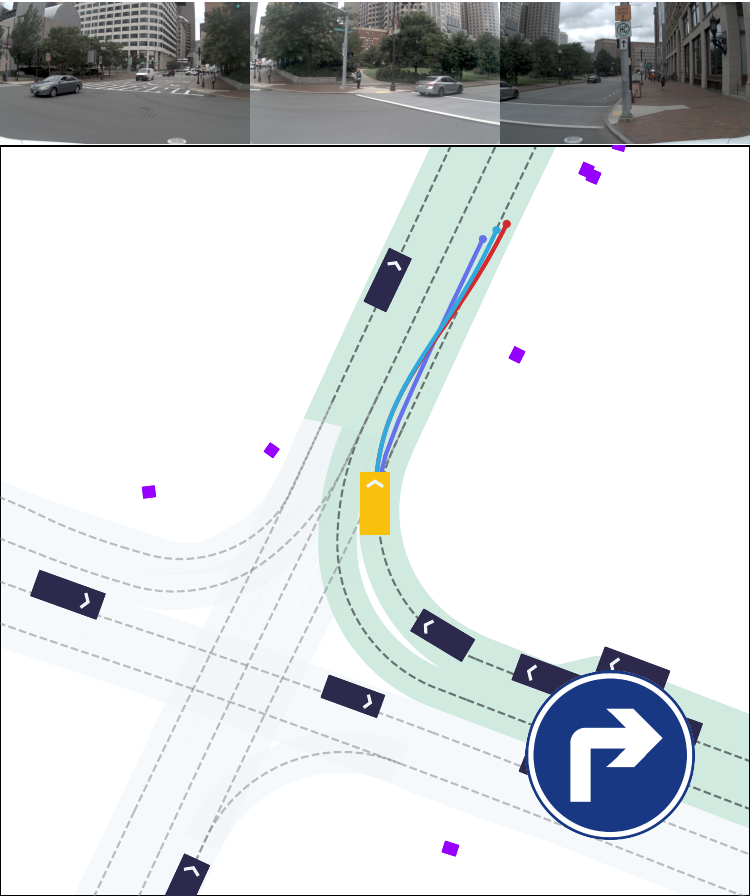}
	\end{minipage}
    \begin{minipage}[t]{\linewidth}
		\centering
		\includegraphics[width=0.24\textwidth]{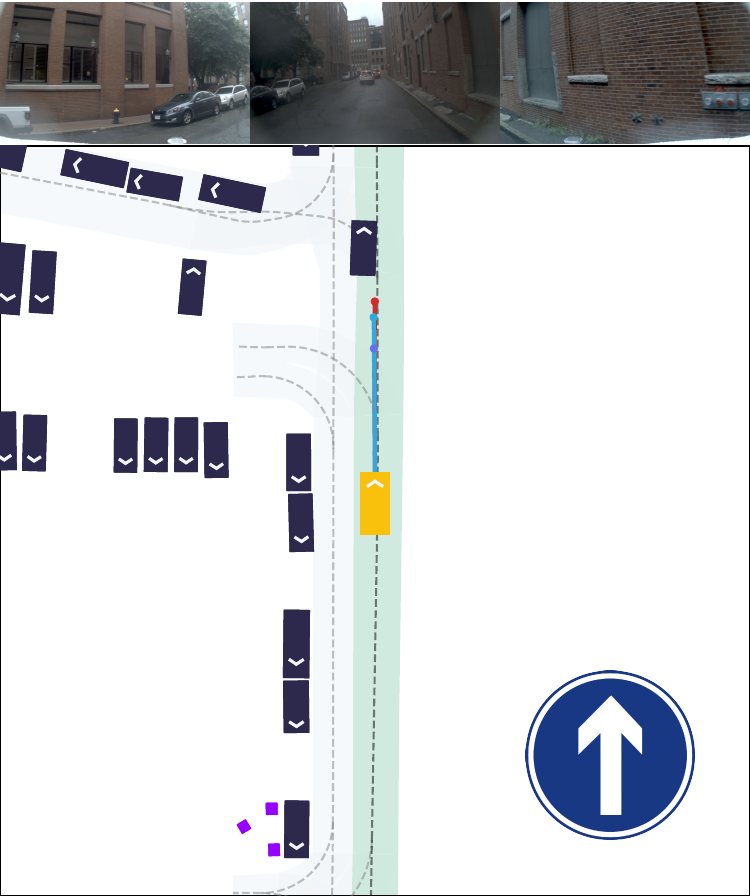}
		\includegraphics[width=0.24\textwidth]{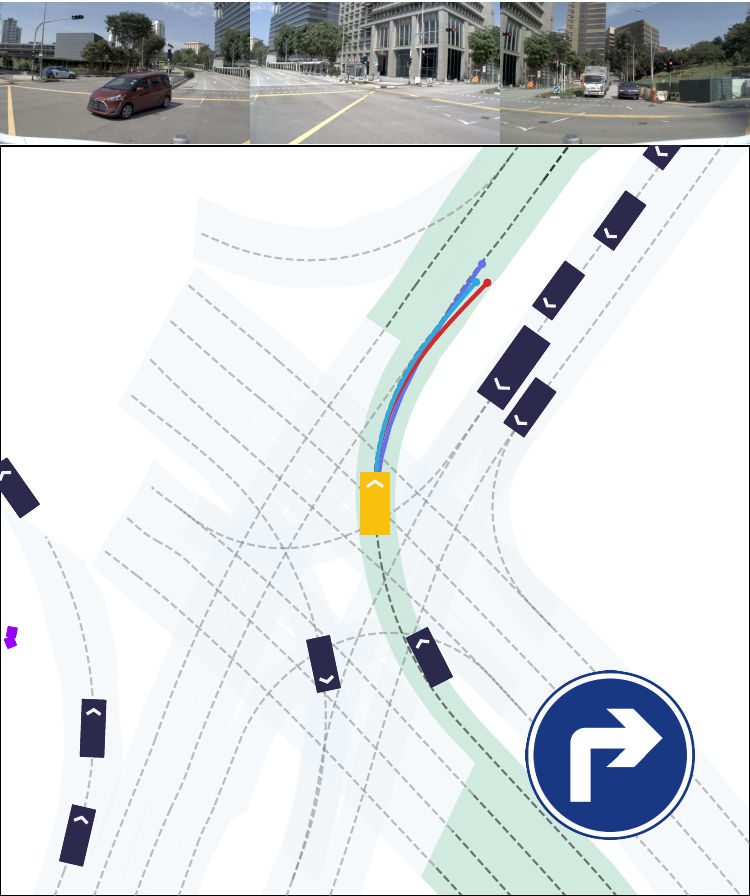}
		\includegraphics[width=0.24\textwidth]{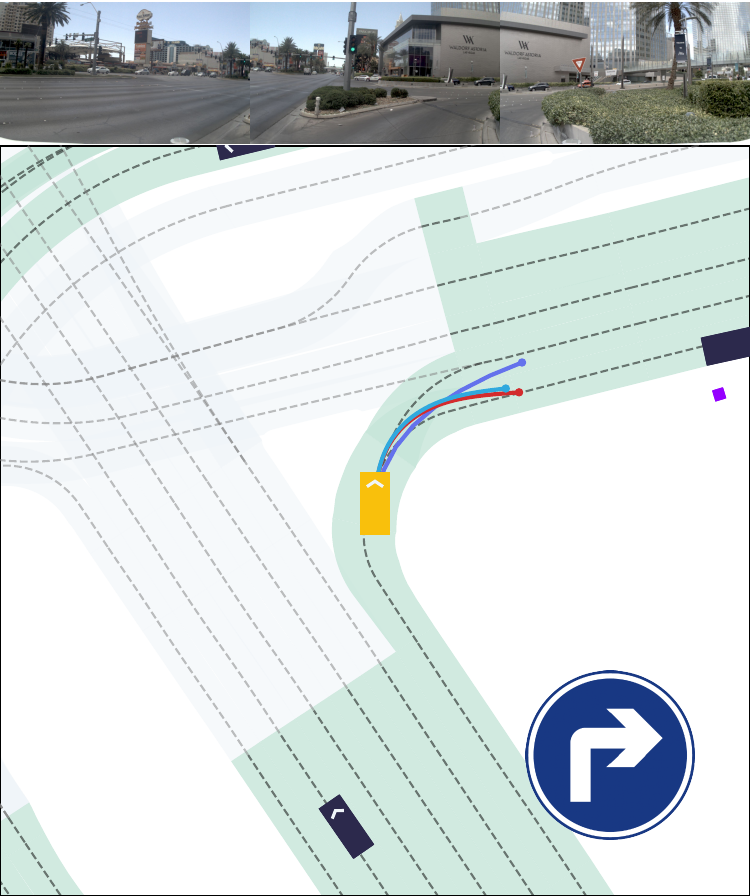}
		\includegraphics[width=0.24\textwidth]{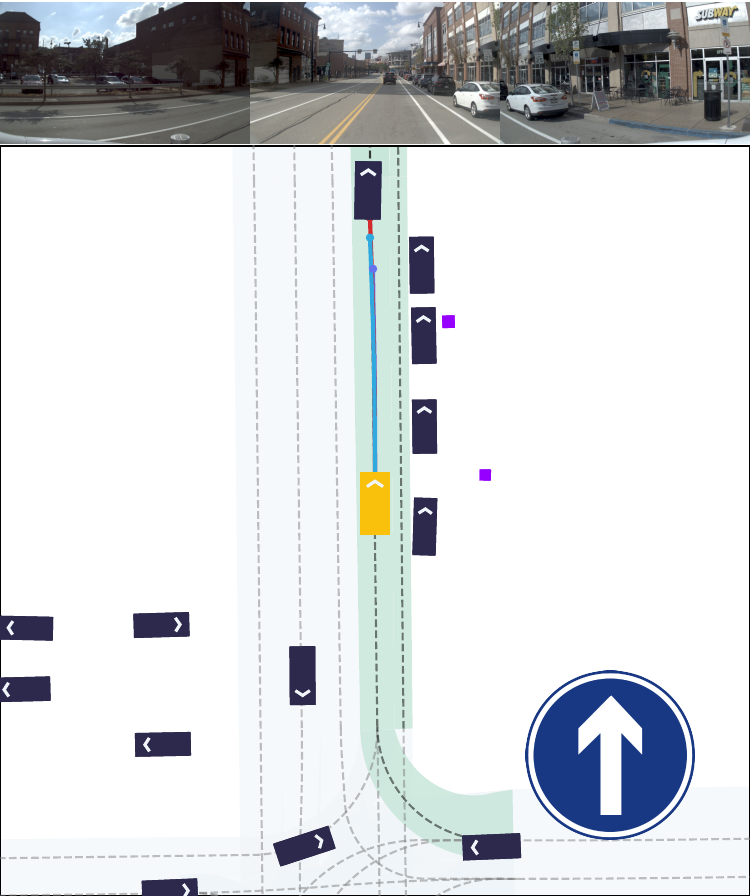}
	\end{minipage}
\caption{ \small NAVSIM Results: {\color[HTML]{2FA8DF} \textit{DP-VLA} w/ \textit{DIPOLE}} fine-tuned model trajectory; {\color[HTML]{6472EC} ground truth} ego trajectory; {\color[HTML]{D32B2D} \textit{DP-VLA}} imitation pretrained model trajectory.
}
\label{fig:goodcase}

\end{figure*}

\paragraph{Baselines.} We select several baselines: 1) \textit{UniAD}~\citep{hu2023planning}: integrates multiple auxiliary tasks such as tracking, mapping, prediction, and occupancy prediction using transformer blocks, and employs latent representations for planning. 2) \textit{PARA-Drive}~\citep{weng2024drive}: adopts a parallel architecture design compared to \textit{UniAD}. 3) \textit{Transfuser}~\citep{chitta2023transfuser}: fuses image and LiDAR information through a dual-branch architecture and incorporates detection and BEV semantic maps for auxiliary supervision. Its latent variant, \textit{LFT}, replaces LiDAR inputs with learnable embeddings. 4) \textit{Hydra-MDP}~\citep{li2024hydra}: winner of the CVPR2024 Challenge, which uses trajectory anchors and a learned reward model for anchor selection. Moreover, we also consider baselines where the agent either maintains its current state or uses a simple MLP for trajectory regression. For our imitation pre-trained VLA model, which directly generates trajectories without post-processing, we denote it as \textit{DP-VLA}. When fine-tuned with DPPO\citep{ren2024diffusion}, we refer to them as \textit{DP-VLA w/ DPPO}. When fine-tuned with our RL algorithm, we refer to it as \textit{DP-VLA w/ DIPOLE}. As mentioned above, we also provide two variants trained on the navtrain and navtest splits.


\paragraph{Evaluation results.} We present the experimental results in Table~\ref{table:navsim}. Notably, our imitation-based VLA model significantly outperforms other baselines, providing a strong foundation for RL fine-tuning. Building on this, fine-tuning with \textit{DIPOLE} on the navtrain dataset improves the PDMS score by 1.4 points (from 88.3 to 89.7), with gains observed in both safety and progress metrics. Furthermore, \textit{DIPOLE} fine-tuning on navtest scenarios yields a substantial 6.5-point PDMS improvement (from 88.3 to 94.8), demonstrating its potential for real-world autonomous driving applications. These results confirm that even for large-scale policies exceeding 1 billion parameters, \textit{DIPOLE} consistently delivers significant performance improvements through stable and greedy policy optimization. To further illustrate the efficacy of the \textit{DIPOLE} fine-tuned model, we present several cases in Figure~\ref{fig:goodcase}, where the pretrained model fails but succeeds after \textit{DIPOLE} fine-tuning. Notably, \textit{DIPOLE} enables \textit{DP-VLA} to mitigate compounding errors and low-level controller tracking errors, effectively correcting trajectories to prevent collisions and erratic driving behavior.

\section{Related Work} 

Reinforcement fine-tuning of diffusion models remains challenging due to their multi-step diffusion process, primarily in terms of learning stability and computational efficiency. A brute-force solution involves directly optimizing the reward via gradient backpropagation. ReFL~\citep{xu2023imagereward} optimizes human preference scores for image generation by backpropagating gradients at specific single steps during the reverse process. DRaFT~\citep{clark2023directly} extends this approach by applying gradient optimization across multiple steps at the end of the reverse process. Such methods are widely used in motion generation~\citep{karunratanakul2024optimizing}, image generation~\citep{prabhudesai2023aligning}, and decision-making tasks~\citep{wangdiffusion}, but suffer from instability due to noisy gradient backpropagation during the denoising process. Some methods avoid gradient computation and instead search for the optimal noise to maximize reward, a strategy referred to as inference-time scaling~\citep{hansen2023idql,ma2025inference, singhal2025general}. Recent approaches also utilize RL to directly search for the best noise~\citep{wagenmaker2025steering}. In both cases, performance remains constrained by the capabilities of the pre-trained model. Moreover, DDPO~\citep{black2023training} treats each noise step as a Gaussian distribution, enabling likelihood estimation and optimization via the REINFORCE~\citep{mohamed2020monte} algorithm. DPPO~\citep{ren2024diffusion} optimizes this approach and extends it to multi-step MDPs using PPO~\citep{schulman2017proximal} for policy improvement. These methods rely on Gaussian approximations that require sufficiently small sampling steps, resulting in inefficient training. Some methods~\citep{lee2023aligning, kang2023efficient, zheng2024safe, maefficient, zheng2025towards} use KL-regularized RL~\citep{kostrikov2021offline, peng2019advantage}, whose solution leads to a simple weighted regression loss. However, these approaches often face a trade-off between greediness and stability.

\section{Conclusion}
We propose \textit{DIPOLE}, an RL method that enables stable and controllable diffusion policy optimization. We revisit KL-regularized RL, which suffers from a trade-off between greediness and stability, and introduce a greedified policy regularization scheme. This scheme decomposes the optimal policy into dichotomous policies with stable training losses. During inference, actions are generated by linearly combining the scores of these policies, enabling controllable greediness. We evaluate \textit{DIPOLE} on widely used RL benchmarks to demonstrate its effectiveness and also train a large VLA model for end-to-end autonomous driving, highlighting its potential for real-world applications. Due to space limit, more discussion on limitations and
 future direction can be found in Appendix~\ref{sec:limitation}.

\section*{Acknowledgement}
This work is supported by Xiaomi EV and National Natural Science Foundation of China under Grant No.62276260, funding from Wuxi Research Institute of Applied Technologies, Tsinghua University under Grant 20242001120 and the Xiongan AI Institute. Furthermore, we would like to thank Jianwei Cui and Kun Ma in Xiaomi EV for their resource support. We would like to express our gratitude to Bin Huang, Enguang Liu and Jianlin Zhang in Xiaomi EV for their valuable discussion.

\bibliography{iclr2026_conference}
\bibliographystyle{iclr2026_conference}

\newpage
\appendix

\section{LLM Usage}

In this paper, we employed Large Language Models (LLMs) solely for polishing the writing. No parts of the technical content, experimental results, or conclusions were generated by LLMs.

\section{Theoretical Interpretations}
\label{appendix:theoretical}

We define our problem under the reinforcement learning problem presented as a Markov Decision Process (MDP)~\citep{sutton1998reinforcement} given by $\mathcal{M} = (\mathcal{S}, \mathcal{A}, \mathcal{P}, r, \gamma) $, which comprises a state space $\mathcal{S}$, an action space $\mathcal{A}$, a state transition $\mathcal{S}$, a reward function $r$ and a discount factor $\gamma$. In this setting, a policy is a probability distribution of actions conditioned on a state. In addition, we assume that all policies induce an irreducible Markov Chain, with any two states reachable from each other by a sequence of transitions that have positive probability. Our goal is to find a policy $\pi$ that maximizes a predefined action evaluation criteria $G$, constrained on a reference policy. 

\closedform*
\begin{proof}
    Consider the optimization problem Eq.~(\ref{eq:new_optim}) with constraints on the probability distribution:
    \begin{equation}
    \label{eq:full_optim}
        \begin{gathered}
        \max_{\pi}~\mathbb{E}_{s \sim d^\pi(s)}\!\left[
          \mathbb{E}_{a \sim \pi(a|s)}\!\left[G(s, a)\right]
          - \tfrac{1}{\omega \beta}\,
          D_\text{KL}\!\big(\pi(\cdot \mid s) \,\|\, 
            \mu(\cdot \mid s)\tfrac{\sigma(\beta G(s,a))}{Z(s)}\big)
        \right] \\
        \text{s.t. }\quad 
        \int_{a} \pi(a\mid s)\, da = 1,~~\forall s \\
        \pi(a\mid s) \geq 0,~~\forall s, a
        \end{gathered}
    \end{equation}
    The Lagrangian is given by:
    \begin{equation}
        \begin{aligned}
            \mathcal{L}(\pi, \alpha_s, \gamma_{s,a}) &= \int_s d^\pi(s) \int_a \pi(a\mid s) G(s, a) \mathrm{d}a\mathrm{d}s \\
            &-\int_s d^\pi(s) \left[\frac{1}{\omega \beta}\int_a \pi(a\mid s) \log \left( \frac{\pi(a \mid s) Z(s)}{\mu(a\mid s)\sigma(\beta G(s,a))}\right)\mathrm{d}a \right] \mathrm{d}s \\
            &+ \int_s \alpha_s \left(\int_a \pi(a\mid s) \mathrm{d}a - 1 \right)\mathrm{d}s
            + \int_{s, a} \gamma_{s,a}\pi(a \mid s) \mathrm{d}a\mathrm{d}s
        \end{aligned}
    \end{equation}
    Take the derivative over $\pi(a \mid s)$ and set to zero:
    \begin{equation}
        \begin{aligned}
            \frac{\partial \mathcal{L}}{\partial \pi(a \mid s)} &= G(s, a) - \frac{1}{\omega \beta}\left( \log \pi(a\mid s) + 1 - \log \frac{\mu(a \mid s)\sigma(\beta G(s, a))}{Z(s)} \right)
            + \alpha_s + \gamma_{s, a} \\
            &= 0
        \end{aligned}
    \end{equation}
    Solve the equation and one can obtain the optimal policy as:
    \begin{equation}
        \pi^\star (a \mid s) = \mu(a\mid s)\sigma(\beta G(s, a)) \exp (\omega \beta G(s, a)) \cdot \exp\left( \omega \beta\frac{\alpha_s + \gamma_{s,a}}{d^\pi(s)} - 1 - \log Z(s) \right)
    \end{equation}
    Note that since we assume all policies induce irreducible Markov chain, thus $d^\pi(s) > 0,~\forall s$. Consider the support of $\mu$ with positive probability and the final resulted optimal policy satisfies:
    \begin{equation}
        \pi^\star(a\mid s) \propto \mu(a\mid s) \cdot \sigma(\beta G(s, a)) \cdot \exp (\omega \cdot \beta G(s, a))
    \end{equation}
\end{proof}

As shown in Eq.~(\ref{eq:cfg}), the score function of the target optimal policy and the dichotomous policies satisfy the linear combination at $t=0$:
\begin{equation}
    \epsilon^\star_0(a|s) = (1+\omega) \epsilon^+_0(a|s) - \omega \epsilon^-_0 (a|s)
\end{equation}
With the marginal condition satisfied, one can obtain the exact intermediate score by~\citep{zheng2024characteristic}:
\begin{equation}
    \epsilon^\star_t(a_t | s) = (1 + \omega) \epsilon^+_t(a_t + \omega \Delta a_t | s) - \omega \epsilon^-_t(a_t + (1+\omega)\Delta a_t | s)
\end{equation}
where $\Delta a_t$ is a non-linear correction term satisfying the following equation:
\begin{equation}
    \Delta a_t = \sqrt{1 - e^{-t}}(\epsilon^-_t(a_t + (1+\omega)\Delta a_t | s) - \epsilon^+_t(a_t + \omega \Delta a_t | s))
\end{equation}
However, solving this non-linear equation requires iteration on the learned neural networks, which is expensive. In practice, one can directly sample using the direct combination of positive and negative score functions, as an approximation when $\omega$ is relatively small~\citep{ho2022cfg}:
\begin{equation}
    \epsilon^\star_t(a|s) \approx (1+\omega) \epsilon^+_t(a|s) - \omega \epsilon^-_t(a|s)
\end{equation}



\section{Algorithm Pseudocode}
\label{sec:algo}

\begin{figure}[!ht]
\vspace*{-4ex}
\begin{minipage}[t]{\textwidth}
\begin{algorithm}[H]
\caption{Training}
\begin{algorithmic}
   \WHILE{not converged}
        \STATE \text{Collect data, or use offline data $\mathcal{D}$.}
        \STATE $\epsilon\sim\mathcal{N}(\mathbf{0},\mathbf{I})$, $t\sim U[0,1]$
        \STATE $a_t \leftarrow$ \text{diffusion/flow forward process}
        \STATE $\theta_1 \leftarrow \theta_1 - \lambda \nabla_{\theta_1} \left[ \sigmoid \left(\beta G \right) \cdot \left\Arrowvert \epsilon-\epsilon^+_{\theta_1}\left(a_t,s,t\right) \right \Arrowvert^2 \right]$
        \STATE $\theta_2 \leftarrow \theta_2 - \lambda \nabla_{\theta_2} \left[ \left(1-\sigmoid \left(\beta G \right)\right) \cdot \left\Arrowvert \epsilon-\epsilon^-_{\theta_2}\left(a_t,s,t\right) \right \Arrowvert^2 \right]$
   \ENDWHILE
\end{algorithmic}
\end{algorithm}
\end{minipage}
\hfill
\begin{minipage}[t]{\textwidth}
\begin{algorithm}[H]
\caption{Sampling}
\begin{algorithmic} %
    \STATE $a_{1} \sim \mathcal{N}(\mathbf{0},\mathbf{I})$
    \STATE $t \leftarrow 1$
    \FOR{$n \in [1, \dots ,N]$}
        \STATE $\tilde{\epsilon} = (1+w)\epsilon^+_{\theta_1}\left(a_t,s,t\right)-w\epsilon_{\theta_2}^{-}\left(a_t,s,t\right)$
        \STATE $t \leftarrow t - (n/N)$
        \STATE $a_t \leftarrow $\text{diffusion/flow reverse process, given $\tilde{\epsilon}$}
    \ENDFOR  %
    \STATE \textbf{return} $a_0$
\end{algorithmic}
\label{alg:sampling}
\end{algorithm}
\end{minipage}
\end{figure}

\begin{figure}[t]
    \centering
    \includegraphics[width=0.85\linewidth]{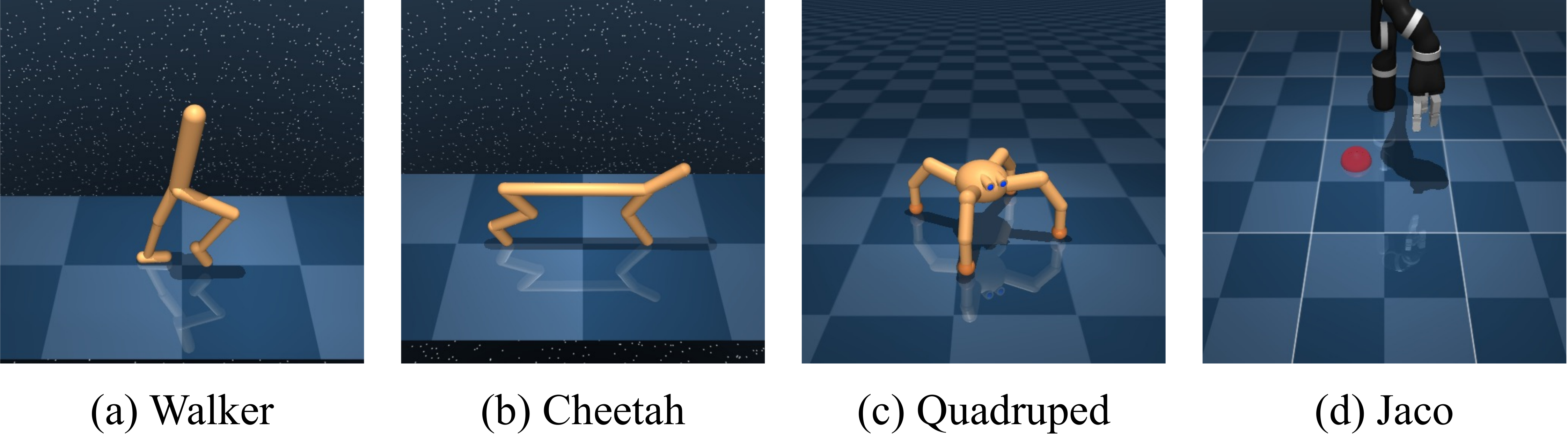}
    \caption{\textbf{ExORL environments.} We experiment on 4 high-dimensional complex domains: Walker, Cheetah, Quadruped, and Jaco Arm.}
    \label{fig:enorl_env}
\end{figure}

\begin{figure}[!ht]
    \centering
    \includegraphics[width=1\linewidth]{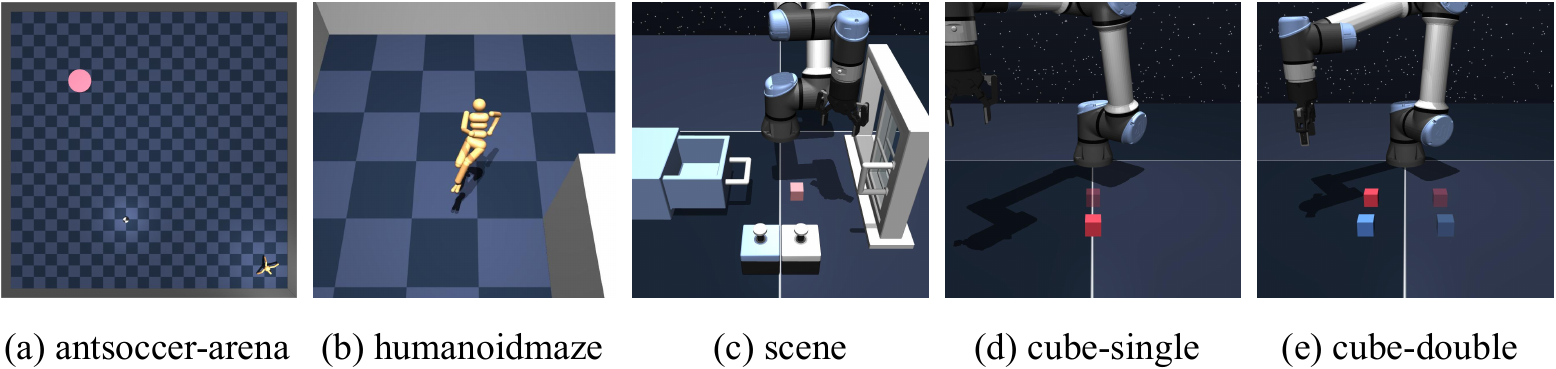}
    \caption{\textbf{OGBench environments.} We experiment on 5 complex domains: antsoccer-arena, humanoidmaze, scene, cube-single, and cube-double.}
    \label{fig:ogbench_env}
\end{figure}

\section{Details on RL benchmarks}
\label{sec:detail_rl}
\subsection{Experimental details}

\label{sec:exp}
In this section, we provide the experimental details, including benchmarks, datasets, and tasks. Our experiments span two primary benchmarks: ExORL~\citep{yarats2022exorl} and OGBench~\citep{ogbench_park2025}

\textbf{ExORL.} ExORL consists of datasets collected by multiple unsupervised RL agents~\citep{laskin2021urlbunsupervisedreinforcementlearning} on the DeepMind Control Suite~\citep{tassa2018deepmindcontrolsuite}. We utilize datasets collected by unsupervised RL algorithms \textit{RND}~\citep{burda2018explorationrandomnetworkdistillation} across four domains (\textit{Walker}, \textit{Jaco}, \textit{Quadruped}, and \textit{Cheetah}). For each environment, we use the full dataset with all transitions from each dataset. 

\begin{itemize}[leftmargin=*]
    \item \textbf{Walker} (locomotion): A bipedal robot with 24-dimensional states (joint positions/velocities) and 6-dimensional actions. Test tasks include \textit{run}, \textit{stand}, and \textit{walk}. Rewards combine dense objectives: maintaining torso height (\textit{stand}) and achieving target velocities (\textit{Run/Walk}).

    \item \textbf{Quadruped} (locomotion): A four-legged robot with 78-dimensional states and 12-dimensional actions. Tasks include \textit{run} and \textit{walk}, with rewards for torso stability and velocity tracking. 

    \item \textbf{Jaco} (goal-reaching): A 6-DoF robotic arm with 55-dimensional states and 6-dimensional actions. Tasks involve reaching four target positions (\textit{Top Left/Right}) using sparse rewards based on proximity to goals. 

    \item \textbf{Cheetah} (locomotion): A running planar biped with 17-dimensional states consisting of positions and velocities of robot joints, and 6-dimensional actions. The reward is linearly proportional to the forward velocity. We consider tasks \textit{run} and \textit{run backward} for evaluation.
\end{itemize}

\textbf{OGBench.} OGBench is designed for offline goal-conditioned RL, containing multiple challenging tasks across robotic manipulation, navigation, and locomotion. We use 30 state-based manipulation and navigation tasks from 6 domains (humanoidmaze-medium-navigate, humanoidmaze-large-navigate, cube-single-play, cube-double-play, scene-play, and antsoccer-arena-navigate). Each domain contains 5 different tasks, and one is set as the default task. To be compatible with standard offline RL settings, we leverage its single-task variant. We evaluate offline performance on all single tasks and online fine-tuning performance on the default tasks of the selected 4 challenging domains.

\begin{itemize}[leftmargin=*]
    \item \textbf{humanoidmaze} (navigation):.Controlling a 21-DoF Humanoid agent to reach a goal position in a given maze.

    \item \textbf{cube-play} (manipulation): Controlling a robot arm to pick and place cube-shaped blocks in order to assemble designated target configurations.

    \item \textbf{scene-play} (manipulation): Long-horizon control of multiple objects, including cube block, a window, a drawer, and two button locks.

    \item \textbf{antsoccer-arena} (navigation): Controlling an Ant agent to dribble a soccer ball. The agent must also carefully control the ball while navigating the environment.
\end{itemize}

\subsection{Implementation details}
We implement DIPOLE in JAX on top of FQL~\citep{fql_park2025} and CFGRL~\citep{frans2025diffusion}.

\textbf{Architectures.} We train 5 neural networks in parallel: two policy networks (positive policy and negative policy)value networks (two Q-value estimators and one V-value estimator). We use three-layer multi-layer perceptron (MLP) with 512 hidden dimensions for both the policy networks and the value networks. We select the flow policy for the evaluation of our approach due to its efficient training process. Our flow policy is based on linear paths and uniform time sampling.

\textbf{Value Learning.} In all experiments, we learn an optimal $V$-value by IQL-style expectile regression. For $Q$-value learning method, we compute the target by optimized $V$-value in ExORL, and the the traditional temporal difference (TD) target Q in OGBench separately. Full details of hyperparameter settings are provided in Table~\ref{table:general_hyp}.

\textbf{Policy extraction and action reweighting.} In RL setting, one of the critical selections of $G(s,a)$ in Eqs. ~(\ref{eq:decompose}) is the advantageous function, i.e. $A(s,a) = Q(s,a)-V(s)$. To induce a more flexible and controllable learning process, we additionally introduce a tunable hyperparameter to shift the distribution of $G(s,a)$. Specifically, the weighting function becomes $\sigma(\beta G(s, a) + k)$ for positive policy and $1-\sigma(\beta G(s, a) + k)$ for negative policy. The full details of per-task hyperparameters are provided in Table \ref{table:hyp_task}.

We implement rejection sampling for inference time policy output. Specifically, we sample $N$ actions for a single state input and select the action that has the highest Q-value:
\begin{equation} \label{eq:topk}
a^\star \triangleq \arg\max_{a \in \{a^{(1)}, \ldots, a^{(N)} \sim \pi(s)\}} Q(s,a).
\end{equation}
\textbf{Evaluation.} We report the average return on ExORL and the success rate on OGBench, following the standard evaluation assessment methods ~\citep{yarats2022exorl,ogbench_park2025}. For offline RL performance evaluation, we fix the gradient to be 1M and report the final score. For the offline-to-online RL performance evaluation, we report both the 1M offline score and the 1M online score.

\textbf{Computation resource.} We train our model on NVIDIA A6000 GPUs. Training a single task on one GPU takes approximately 0.5 hours on ExORL and 1.5 hours on OGBench.

\subsection{Hyperparameters}

In this section, we provide the detailed hyperparameter setup in Table \ref{table:general_hyp} and Table \ref{table:hyp_task}. In our experiments, the model architecture and basic algorithm hyperparameters remain unchanged, as detailed in Table \ref{table:general_hyp}. To encourage a better trade-off between greediness and stability, we adopt domain-specific hyperparameters, including expectile parameters $\tau$, beta $\beta$, shift factor $k$, and discount factor $\gamma$, as detailed in Table \ref{table:hyp_task}.

\begin{table}[ht]
\vspace{-1em}
\caption{General hyperparameters used for DIPOLE}
\vspace{-1em}
\label{table:general_hyp}
\centering
\renewcommand{\arraystretch}{1.2}
\begin{tabular}{@{}lll@{}}
\toprule
\multicolumn{1}{c}{\multirow{12}{*}{\centering DIPOLE Hyperparameters}} & Hyperparameter                    & Value     \\ \cmidrule(lr){2-3}
                                                                    & Optimizer                  & Adam      \\
                                                                    & Policy learning rate                  & 3e-4      \\
                                                                    & Value learning rate                  & 3e-4      \\
                                                                    & Offline learning steps               & 1,000,000 \\
                                                                    & Online fintuning steps              & 0 (offline), \\[-3pt] &&1,000,000 (offline-to-online) \\
                                                                    & Mini-batch                        & 512 (ExORL), 256 (OGBench)      \\
                                                                    & Soft update factor $\lambda$                & 0.005      \\
                                                                    & Diffusion/Flow steps $T$   & 32 (ExORL), 10 (OGBench)    \\
                                                                    & Clip Q & false (cube-single-play; scene-play), \\[-3pt] &&true (others) \\
                                                                    \cmidrule(lr){1-3}
\multicolumn{1}{c}{\multirow{3}{*}{\centering Architecture}}        & Policy MLP hidden dimension               & [512, 512, 512]       \\
                                                                    & Value MLP hidden dimension               & [512, 512, 512]    \\
                                                                    & Activation function           & tanh    \\
                                                                    \bottomrule
\end{tabular}
\vspace{-1em}
\end{table}

\begin{table}[t]
\centering
\caption{\label{table:hyp_task} Task-specific hyperparameters for DIPOLE.}
\resizebox{1\linewidth}{!}{\scriptsize
\begin{tabular}{lccccccc}
\toprule
\textbf{Task Category} & beta $\beta$ & shift factor $k$ & discount $\gamma$ & expectile $\tau$ & sample actions $N$ \\
\midrule
OGBench-humanoidmaze-medium-navigate                    & 1 & 0 & 0.99 & 0.9 & 4 \\
OGBench-humanoidmaze-large-navigate                     & 1 & 0 & 0.995 & 0.9 & 8 \\

OGBench-antsoccer-arena-navigate                        & 1 & 0 & 0.995 & 0.9 & 4 \\
OGBench-cube-single-play                                & 0.5 & 1 & 0.99 & 0.9 & 2 \\
OGBench-cube-double-play                                & 0.5 & 0.5 & 0.99 & 0.9 & 2 \\
OGBench-scene-play                                      & 1 & 1 & 0.99 & 0.9 & 2 \\
\cmidrule(lr){1-6}
ExORL-walker-walk                                      & 3.5 & -2 & 0.99 & 0.95 & 32 \\
ExORL-walker-stand                                      & 4.5 & -2 & 0.99 & 0.99 & 32 \\
ExORL-walker-run                                      & 4.5 & -2 & 0.99 & 0.9 & 32 \\
ExORL-quadruped-walk                                    & 3 & 0 & 0.99 & 0.9 & 32 \\
ExORL-quadruped-run                                      & 4 & -2 & 0.99 & 0.9 & 32 \\
ExORL-jaco-reach-top-left                      & 1 & 0 & 0.99 & 0.9 & 32 \\
ExORL-jaco-reach-top-right                      & 1 & -1 & 0.99 & 0.9 & 32 \\                         
ExORL-cheetah-run                              & 4 & -1 & 0.99 & 0.9 & 32 \\
ExORL-quadruped-run-backward                     & 4 & -1 & 0.99 & 0.9 & 32 \\
\bottomrule
\end{tabular}}
\vspace{-0.5em}
\end{table}

\subsection{Ablation study}

\label{sec:ablation}

\textbf{Hyperparameter.} Both hyperparameters beta $\beta$, shift factor $k$, expectile factor $\tau$, and rejection sampling action number $N$ are important for DIPOLE's performance. In Figure~\ref{fig:ablation}, we present the performance changes when fixing single action sampling, and present the performance changes for the default action sampling number when tuning the expectile factor. 
\textcolor{rebuttle}{We further ablate the influence of beta $\beta$ and CFG scale $\omega$ on DIPOLE. We conducted experiments on ExORL benchmark, average over 3 random seeds. Both the impact of $\beta$ and $\omega$ are within a similar pattern, which shows a easy-tuning property of DIPOLE.}

\begin{figure*}[t]
    \vspace{1em}
	\begin{minipage}[t]{0.46\linewidth}
		\centering
		\includegraphics[width=0.48\textwidth]{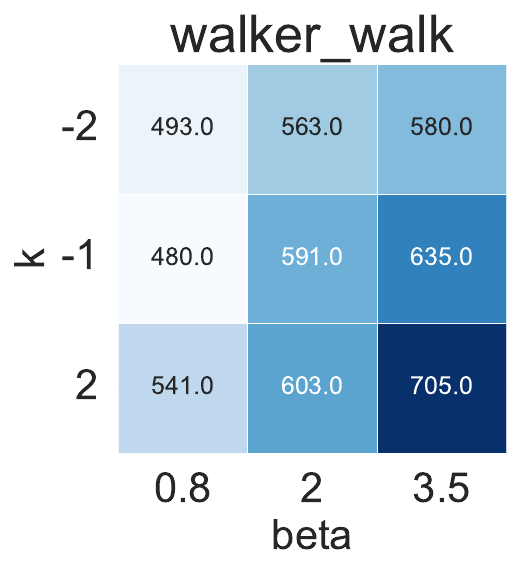}
		\includegraphics[width=0.48\textwidth]{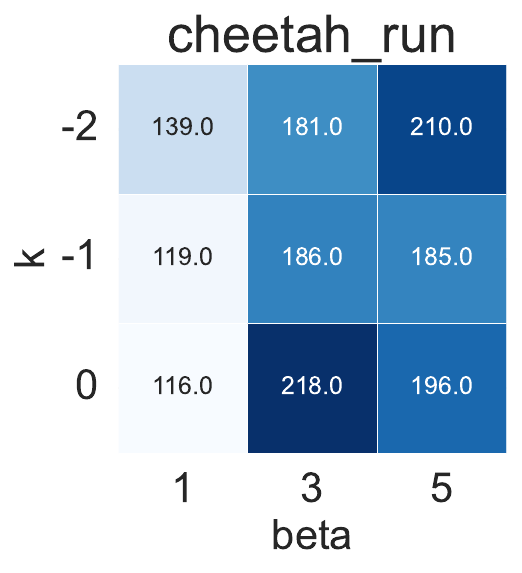}
    \end{minipage}
    \begin{minipage}[t]{0.54\linewidth}
		\centering
		\includegraphics[width=0.48\textwidth]{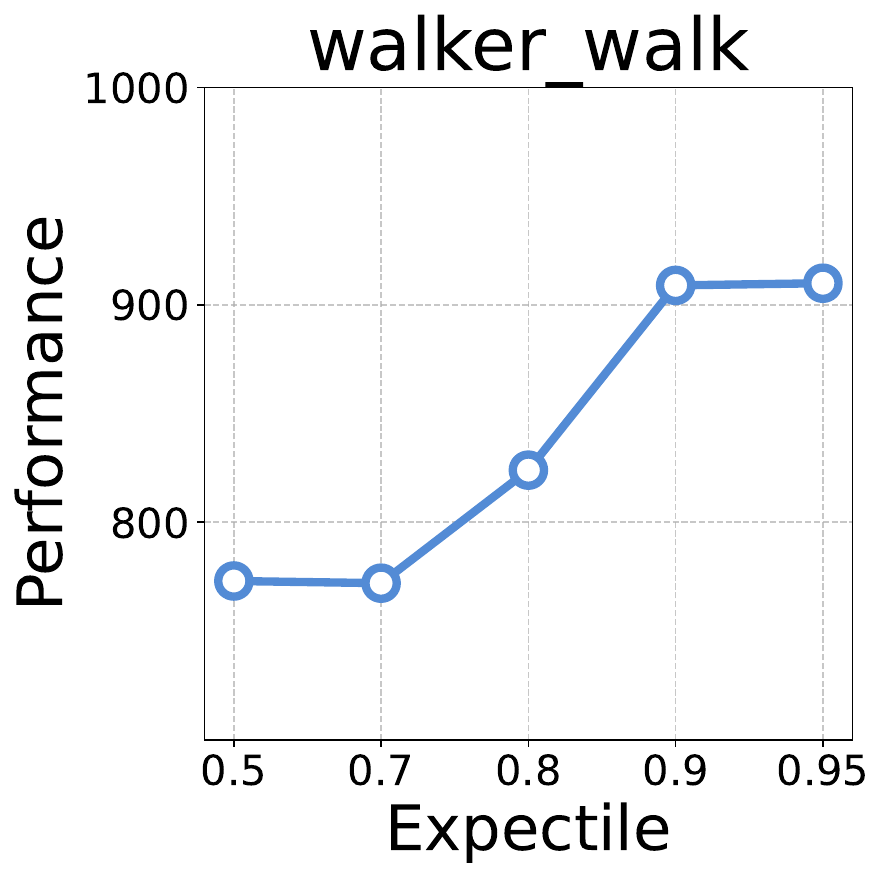}
		\includegraphics[width=0.48\textwidth]{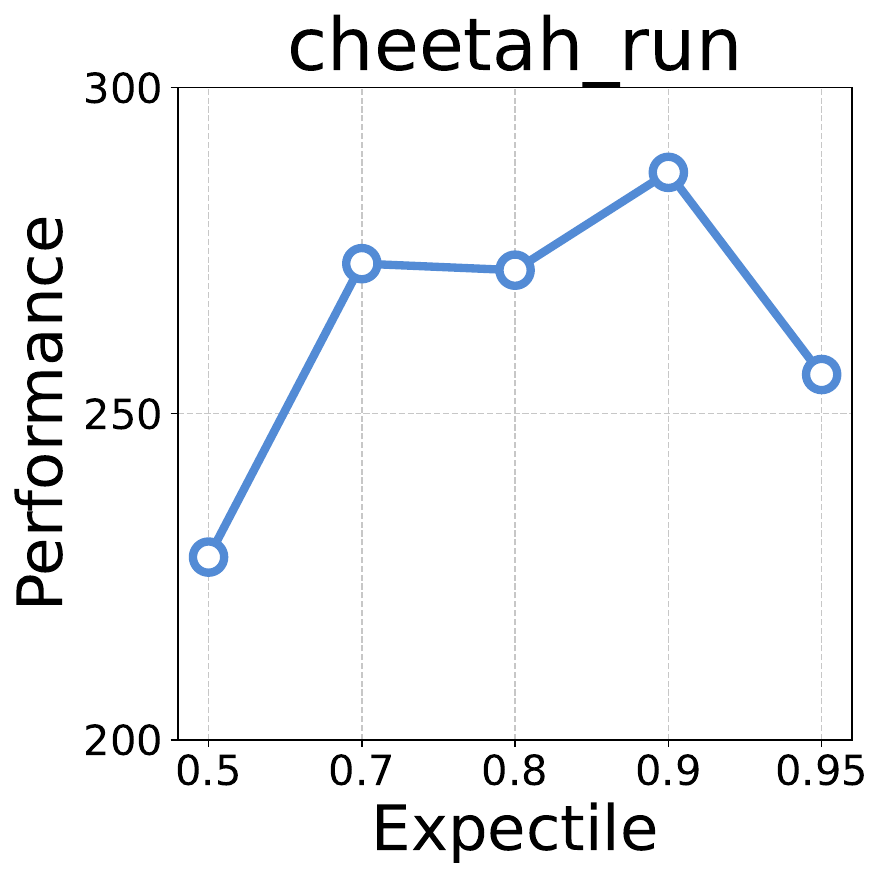}
	\end{minipage}
    \begin{minipage}[t]{\linewidth}
        \centering
        \vspace{1em}
        \includegraphics[width=0.95\textwidth]{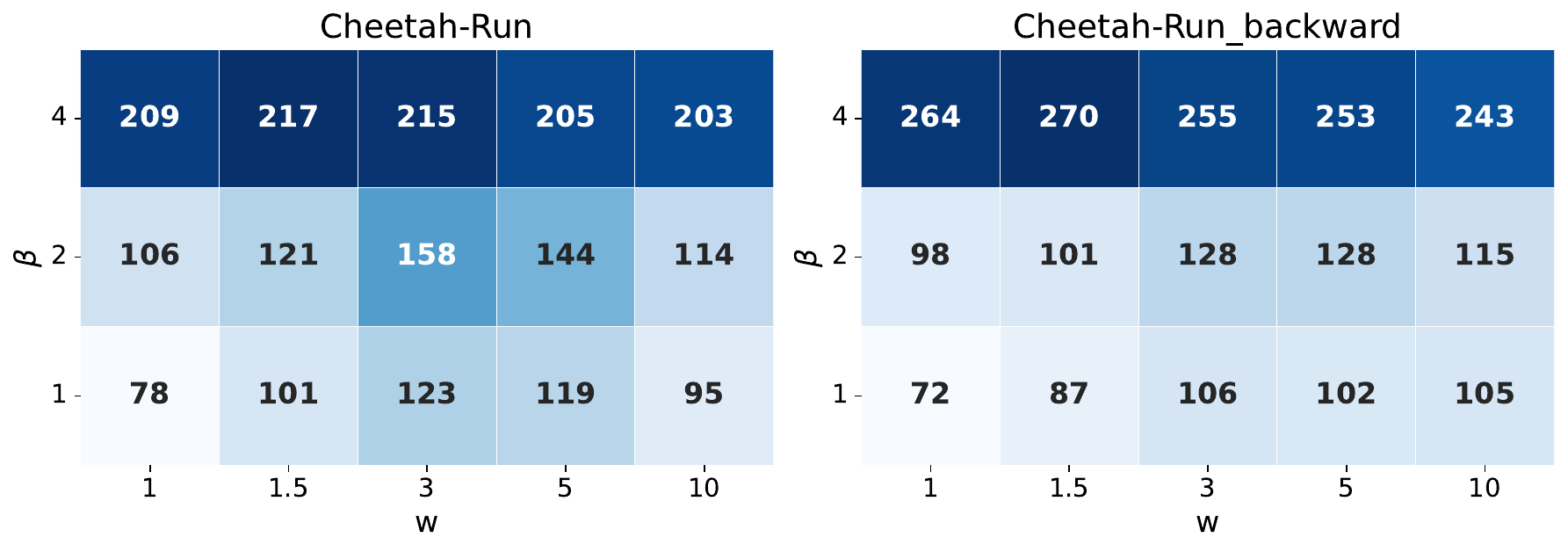}
	\end{minipage}
\caption{\textcolor{rebuttle}{Top-left: ablation on $\beta$ and $k$; Top-right: ablation on expectile $\tau$; Bottom: ablation on $w$ and $\beta$}}
\label{fig:ablation}
\end{figure*}


\subsection{Additional results}
\textbf{OGBench full results.} In this section, we provide the full experimental results for all single tasks on the OGBench, as shown in Figure \ref{table:ogbench_full}. All results are averaged over 8 random seeds. We report the mean and standard deviation of the final score, after 1M gradient steps.

\textbf{OGBench offline-to-online learning curves.} We also present the training curves of \textit{DIPOLE}, including both the 1M offline gradient steps and 1M online gradient steps, as shown in Figure \ref{fig:o2o}. \textit{DIPOLE} possesses steady improvement after online interaction. Comparing with other offline RL methods, \textit{DIPOLE} achieves better performance after the full finetuning process.

\begin{table}[htbp]
\centering
\vspace{-2em}
\caption{\label{table:ogbench_full} 
\small OGBench full results.}
\vspace{-0.8em}
\resizebox{1\linewidth}{!}{\scriptsize
\begin{tabular}{lccccccc}
\toprule
& \multicolumn{2}{c}{Gaussian Policy} & \multicolumn{4}{c}{Diffusion/Flow Policy} \\
\cmidrule(lr){2-3} \cmidrule(lr){4-7}
\textbf{Task Category} & IQL & ReBRAC & IDQL & IFQL & FQL & DIPOLE \\
\midrule
humanoidmaze-medium-navigate-singletask-task1   & $\textrm{32} \textcolor{lightgray}{\pm \textrm{7}}$ 
                                                & $\textrm{16} \textcolor{lightgray}{\pm \textrm{9}}$ 
                                                & $\textrm{1} \textcolor{lightgray}{\pm \textrm{1}}$ 
                                                & $\textrm{69} \textcolor{lightgray}{\pm \textrm{19}}$ 
                                                & $\textrm{19} \textcolor{lightgray}{\pm \textrm{12}}$   
                                                & $\textrm{63} \textcolor{lightgray}{\pm \textrm{6}}$
                                                \\
humanoidmaze-medium-navigate-singletask-task2   & $\textrm{41} \textcolor{lightgray}{\pm \textrm{9}}$ 
                                                & $\textrm{18} \textcolor{lightgray}{\pm \textrm{16}}$ 
                                                & $\textrm{1} \textcolor{lightgray}{\pm \textrm{1}}$ 
                                                & $\textrm{85} \textcolor{lightgray}{\pm \textrm{11}}$ 
                                                & $\textrm{94} \textcolor{lightgray}{\pm \textrm{3}}$ 
                                                & $\textrm{91} \textcolor{lightgray}{\pm \textrm{2}}$  
                                                \\
humanoidmaze-medium-navigate-singletask-task3   & $\textrm{25} \textcolor{lightgray}{\pm \textrm{5}}$ 
                                                & $\textrm{36} \textcolor{lightgray}{\pm \textrm{13}}$ 
                                                & $\textrm{0} \textcolor{lightgray}{\pm \textrm{1}}$ 
                                                & $\textrm{49} \textcolor{lightgray}{\pm \textrm{49}}$ 
                                                & $\textrm{74} \textcolor{lightgray}{\pm \textrm{18}}$ 
                                                & $\textrm{88} \textcolor{lightgray}{\pm \textrm{4}}$ 
                                                \\
humanoidmaze-medium-navigate-singletask-task4   & $\textrm{0} \textcolor{lightgray}{\pm \textrm{1}}$ 
                                                & $\textrm{15} \textcolor{lightgray}{\pm \textrm{16}}$ 
                                                & $\textrm{1} \textcolor{lightgray}{\pm \textrm{1}}$ 
                                                & $\textrm{1} \textcolor{lightgray}{\pm \textrm{1}}$ 
                                                & $\textrm{3} \textcolor{lightgray}{\pm \textrm{4}}$ 
                                                & $\textrm{1} \textcolor{lightgray}{\pm \textrm{1}}$ 
                                                \\
humanoidmaze-medium-navigate-singletask-task5   & $\textrm{66} \textcolor{lightgray}{\pm \textrm{4}}$ 
                                                & $\textrm{24} \textcolor{lightgray}{\pm \textrm{20}}$ 
                                                & $\textrm{1} \textcolor{lightgray}{\pm \textrm{1}}$ 
                                                & $\textrm{98} \textcolor{lightgray}{\pm \textrm{2}}$ 
                                                & $\textrm{97} \textcolor{lightgray}{\pm \textrm{2}}$ 
                                                & $\textrm{96} \textcolor{lightgray}{\pm \textrm{2}}$ 
                                                \\
\cmidrule(lr){1-7}
humanoidmaze-large-navigate-singletask-task1    & $\textrm{3} \textcolor{lightgray}{\pm \textrm{1}}$ 
                                                & $\textrm{2} \textcolor{lightgray}{\pm \textrm{1}}$ 
                                                & $\textrm{0} \textcolor{lightgray}{\pm \textrm{0}}$ 
                                                & $\textrm{6} \textcolor{lightgray}{\pm \textrm{2}}$ 
                                                & $\textrm{7} \textcolor{lightgray}{\pm \textrm{6}}$
                                                & $\textrm{20} \textcolor{lightgray}{\pm \textrm{5}}$
                                                \\
humanoidmaze-large-navigate-singletask-task2    & $\textrm{0} \textcolor{lightgray}{\pm \textrm{0}}$ 
                                                & $\textrm{0} \textcolor{lightgray}{\pm \textrm{0}}$ 
                                                & $\textrm{0} \textcolor{lightgray}{\pm \textrm{0}}$ 
                                                & $\textrm{0} \textcolor{lightgray}{\pm \textrm{0}}$ 
                                                & $\textrm{0} \textcolor{lightgray}{\pm \textrm{0}}$
                                                & $\textrm{0} \textcolor{lightgray}{\pm \textrm{0}}$  
                                                \\
humanoidmaze-large-navigate-singletask-task3    & $\textrm{7} \textcolor{lightgray}{\pm \textrm{3}}$ 
                                                & $\textrm{8} \textcolor{lightgray}{\pm \textrm{4}}$ 
                                                & $\textrm{3} \textcolor{lightgray}{\pm \textrm{1}}$ 
                                                & $\textrm{48} \textcolor{lightgray}{\pm \textrm{10}}$ 
                                                & $\textrm{11} \textcolor{lightgray}{\pm \textrm{7}}$
                                                & $\textrm{7} \textcolor{lightgray}{\pm \textrm{3}}$ 
                                                \\
humanoidmaze-large-navigate-singletask-task4    & $\textrm{1} \textcolor{lightgray}{\pm \textrm{0}}$ 
                                                & $\textrm{1} \textcolor{lightgray}{\pm \textrm{1}}$ 
                                                & $\textrm{0} \textcolor{lightgray}{\pm \textrm{0}}$ 
                                                & $\textrm{1} \textcolor{lightgray}{\pm \textrm{1}}$ 
                                                & $\textrm{2} \textcolor{lightgray}{\pm \textrm{3}}$
                                                & $\textrm{1} \textcolor{lightgray}{\pm \textrm{1}}$ 
                                                \\
humanoidmaze-large-navigate-singletask-task5    & $\textrm{1} \textcolor{lightgray}{\pm \textrm{1}}$ 
                                                & $\textrm{2} \textcolor{lightgray}{\pm \textrm{2}}$ 
                                                & $\textrm{0} \textcolor{lightgray}{\pm \textrm{0}}$ 
                                                & $\textrm{0} \textcolor{lightgray}{\pm \textrm{0}}$ 
                                                & $\textrm{1} \textcolor{lightgray}{\pm \textrm{3}}$
                                                & $\textrm{2} \textcolor{lightgray}{\pm \textrm{4}}$ 
                                                \\
\cmidrule(lr){1-7}
antsoccer-arena-navigate-singletask-task1       & $\textrm{14} \textcolor{lightgray}{\pm \textrm{5}}$ 
                                                & $\textrm{0} \textcolor{lightgray}{\pm \textrm{0}}$ 
                                                & $\textrm{44} \textcolor{lightgray}{\pm \textrm{12}}$ 
                                                & $\textrm{61} \textcolor{lightgray}{\pm \textrm{25}}$ 
                                                & $\textrm{77} \textcolor{lightgray}{\pm \textrm{4}}$
                                                & $\textrm{82} \textcolor{lightgray}{\pm \textrm{7}}$ 
                                                \\
antsoccer-arena-navigate-singletask-task2       & $\textrm{17} \textcolor{lightgray}{\pm \textrm{7}}$ 
                                                & $\textrm{0} \textcolor{lightgray}{\pm \textrm{1}}$ 
                                                & $\textrm{15} \textcolor{lightgray}{\pm \textrm{12}}$ 
                                                & $\textrm{75} \textcolor{lightgray}{\pm \textrm{5}}$ 
                                                & $\textrm{88} \textcolor{lightgray}{\pm \textrm{3}}$ 
                                                & $\textrm{74} \textcolor{lightgray}{\pm \textrm{5}}$ 
                                                \\
antsoccer-arena-navigate-singletask-task3       & $\textrm{6} \textcolor{lightgray}{\pm \textrm{4}}$ 
                                                & $\textrm{0} \textcolor{lightgray}{\pm \textrm{0}}$ 
                                                & $\textrm{0} \textcolor{lightgray}{\pm \textrm{0}}$ 
                                                & $\textrm{14} \textcolor{lightgray}{\pm \textrm{22}}$ 
                                                & $\textrm{61} \textcolor{lightgray}{\pm \textrm{6}}$
                                                & $\textrm{55} \textcolor{lightgray}{\pm \textrm{8}}$ 
                                                \\
antsoccer-arena-navigate-singletask-task4       & $\textrm{3} \textcolor{lightgray}{\pm \textrm{2}}$ 
                                                & $\textrm{0} \textcolor{lightgray}{\pm \textrm{0}}$ 
                                                & $\textrm{0} \textcolor{lightgray}{\pm \textrm{1}}$ 
                                                & $\textrm{16} \textcolor{lightgray}{\pm \textrm{9}}$ 
                                                & $\textrm{39} \textcolor{lightgray}{\pm \textrm{6}}$
                                                & $\textrm{40} \textcolor{lightgray}{\pm \textrm{10}}$ 
                                                \\
antsoccer-arena-navigate-singletask-task5       & $\textrm{2} \textcolor{lightgray}{\pm \textrm{2}}$ 
                                                & $\textrm{0} \textcolor{lightgray}{\pm \textrm{0}}$ 
                                                & $\textrm{0} \textcolor{lightgray}{\pm \textrm{0}}$ 
                                                & $\textrm{0} \textcolor{lightgray}{\pm \textrm{1}}$ 
                                                & $\textrm{36} \textcolor{lightgray}{\pm \textrm{9}}$
                                                & $\textrm{32} \textcolor{lightgray}{\pm \textrm{5}}$ 
                                                \\
\cmidrule(lr){1-7}
cube-single-play-singletask-task1               & $\textrm{88} \textcolor{lightgray}{\pm \textrm{3}}$ 
                                                & $\textrm{89} \textcolor{lightgray}{\pm \textrm{5}}$ 
                                                & $\textrm{95} \textcolor{lightgray}{\pm \textrm{2}}$ 
                                                & $\textrm{79} \textcolor{lightgray}{\pm \textrm{4}}$ 
                                                & $\textrm{97} \textcolor{lightgray}{\pm \textrm{2}}$
                                                & $\textrm{97} \textcolor{lightgray}{\pm \textrm{2}}$ 
                                                \\
cube-single-play-singletask-task2               & $\textrm{85} \textcolor{lightgray}{\pm \textrm{8}}$ 
                                                & $\textrm{92} \textcolor{lightgray}{\pm \textrm{4}}$ 
                                                & $\textrm{96} \textcolor{lightgray}{\pm \textrm{2}}$ 
                                                & $\textrm{73} \textcolor{lightgray}{\pm \textrm{3}}$ 
                                                & $\textrm{97} \textcolor{lightgray}{\pm \textrm{2}}$
                                                & $\textrm{98} \textcolor{lightgray}{\pm \textrm{2}}$ 
                                                \\
cube-single-play-singletask-task3               & $\textrm{91} \textcolor{lightgray}{\pm \textrm{5}}$ 
                                                & $\textrm{93} \textcolor{lightgray}{\pm \textrm{3}}$ 
                                                & $\textrm{99} \textcolor{lightgray}{\pm \textrm{1}}$ 
                                                & $\textrm{88} \textcolor{lightgray}{\pm \textrm{4}}$ 
                                                & $\textrm{98} \textcolor{lightgray}{\pm \textrm{2}}$
                                                & $\textrm{99} \textcolor{lightgray}{\pm \textrm{2}}$ 
                                                \\
cube-single-play-singletask-task4               & $\textrm{73} \textcolor{lightgray}{\pm \textrm{6}}$ 
                                                & $\textrm{92} \textcolor{lightgray}{\pm \textrm{3}}$ 
                                                & $\textrm{93} \textcolor{lightgray}{\pm \textrm{4}}$ 
                                                & $\textrm{79} \textcolor{lightgray}{\pm \textrm{6}}$ 
                                                & $\textrm{94} \textcolor{lightgray}{\pm \textrm{3}}$
                                                & $\textrm{94} \textcolor{lightgray}{\pm \textrm{5}}$ 
                                                \\
cube-single-play-singletask-task5               & $\textrm{78} \textcolor{lightgray}{\pm \textrm{9}}$ 
                                                & $\textrm{87} \textcolor{lightgray}{\pm \textrm{8}}$ 
                                                & $\textrm{90} \textcolor{lightgray}{\pm \textrm{6}}$ 
                                                & $\textrm{77} \textcolor{lightgray}{\pm \textrm{7}}$ 
                                                & $\textrm{93} \textcolor{lightgray}{\pm \textrm{3}}$
                                                & $\textrm{96} \textcolor{lightgray}{\pm \textrm{3}}$ 
                                                \\
\cmidrule(lr){1-7}
cube-double-play-singletask-task1               & $\textrm{27} \textcolor{lightgray}{\pm \textrm{5}}$ 
                                                & $\textrm{45} \textcolor{lightgray}{\pm \textrm{6}}$ 
                                                & $\textrm{39} \textcolor{lightgray}{\pm \textrm{19}}$ 
                                                & $\textrm{35} \textcolor{lightgray}{\pm \textrm{9}}$ 
                                                & $\textrm{61} \textcolor{lightgray}{\pm \textrm{9}}$
                                                & $\textrm{68} \textcolor{lightgray}{\pm \textrm{7}}$ 
                                                \\
cube-double-play-singletask-task2               & $\textrm{1} \textcolor{lightgray}{\pm \textrm{1}}$ 
                                                & $\textrm{7} \textcolor{lightgray}{\pm \textrm{3}}$ 
                                                & $\textrm{16} \textcolor{lightgray}{\pm \textrm{10}}$ 
                                                & $\textrm{9} \textcolor{lightgray}{\pm \textrm{5}}$ 
                                                & $\textrm{36} \textcolor{lightgray}{\pm \textrm{6}}$
                                                & $\textrm{44} \textcolor{lightgray}{\pm \textrm{10}}$ 
                                                \\
cube-double-play-singletask-task3               & $\textrm{0} \textcolor{lightgray}{\pm \textrm{0}}$ 
                                                & $\textrm{4} \textcolor{lightgray}{\pm \textrm{1}}$ 
                                                & $\textrm{17} \textcolor{lightgray}{\pm \textrm{8}}$ 
                                                & $\textrm{8} \textcolor{lightgray}{\pm \textrm{5}}$ 
                                                & $\textrm{22} \textcolor{lightgray}{\pm \textrm{5}}$
                                                & $\textrm{51} \textcolor{lightgray}{\pm \textrm{6}}$ 
                                                \\
cube-double-play-singletask-task4               & $\textrm{0} \textcolor{lightgray}{\pm \textrm{0}}$ 
                                                & $\textrm{1} \textcolor{lightgray}{\pm \textrm{1}}$ 
                                                & $\textrm{0} \textcolor{lightgray}{\pm \textrm{1}}$ 
                                                & $\textrm{1} \textcolor{lightgray}{\pm \textrm{1}}$ 
                                                & $\textrm{5} \textcolor{lightgray}{\pm \textrm{2}}$
                                                & $\textrm{6} \textcolor{lightgray}{\pm \textrm{2}}$ 
                                                \\
cube-double-play-singletask-task5               & $\textrm{4} \textcolor{lightgray}{\pm \textrm{3}}$ 
                                                & $\textrm{4} \textcolor{lightgray}{\pm \textrm{2}}$ 
                                                & $\textrm{1} \textcolor{lightgray}{\pm \textrm{1}}$ 
                                                & $\textrm{17} \textcolor{lightgray}{\pm \textrm{6}}$ 
                                                & $\textrm{19} \textcolor{lightgray}{\pm \textrm{10}}$
                                                & $\textrm{50} \textcolor{lightgray}{\pm \textrm{8}}$ 
                                                \\
\cmidrule(lr){1-7}
scene-play-singletask-task1                     & $\textrm{94} \textcolor{lightgray}{\pm \textrm{3}}$ 
                                                & $\textrm{95} \textcolor{lightgray}{\pm \textrm{2}}$ 
                                                & $\textrm{100} \textcolor{lightgray}{\pm \textrm{0}}$ 
                                                & $\textrm{98} \textcolor{lightgray}{\pm \textrm{3}}$ 
                                                & $\textrm{100} \textcolor{lightgray}{\pm \textrm{0}}$
                                                & $\textrm{100} \textcolor{lightgray}{\pm \textrm{0}}$ 
                                                \\
scene-play-singletask-task2                     & $\textrm{12} \textcolor{lightgray}{\pm \textrm{3}}$ 
                                                & $\textrm{50} \textcolor{lightgray}{\pm \textrm{13}}$ 
                                                & $\textrm{33} \textcolor{lightgray}{\pm \textrm{14}}$ 
                                                & $\textrm{0} \textcolor{lightgray}{\pm \textrm{0}}$ 
                                                & $\textrm{76} \textcolor{lightgray}{\pm \textrm{9}}$
                                                & $\textrm{96} \textcolor{lightgray}{\pm \textrm{3}}$ 
                                                \\
scene-play-singletask-task3                     & $\textrm{32} \textcolor{lightgray}{\pm \textrm{7}}$ 
                                                & $\textrm{55} \textcolor{lightgray}{\pm \textrm{16}}$ 
                                                & $\textrm{94} \textcolor{lightgray}{\pm \textrm{4}}$ 
                                                & $\textrm{54} \textcolor{lightgray}{\pm \textrm{19}}$ 
                                                & $\textrm{98} \textcolor{lightgray}{\pm \textrm{1}}$
                                                & $\textrm{99} \textcolor{lightgray}{\pm \textrm{1}}$ 
                                                \\
scene-play-singletask-task4                     & $\textrm{0} \textcolor{lightgray}{\pm \textrm{1}}$ 
                                                & $\textrm{3} \textcolor{lightgray}{\pm \textrm{3}}$ 
                                                & $\textrm{4} \textcolor{lightgray}{\pm \textrm{3}}$ 
                                                & $\textrm{0} \textcolor{lightgray}{\pm \textrm{0}}$ 
                                                & $\textrm{5} \textcolor{lightgray}{\pm \textrm{4}}$
                                                & $\textrm{5} \textcolor{lightgray}{\pm \textrm{6}}$ 
                                                \\
scene-play-singletask-task5                     & $\textrm{0} \textcolor{lightgray}{\pm \textrm{0}}$ 
                                                & $\textrm{0} \textcolor{lightgray}{\pm \textrm{0}}$ 
                                                & $\textrm{0} \textcolor{lightgray}{\pm \textrm{0}}$ 
                                                & $\textrm{0} \textcolor{lightgray}{\pm \textrm{0}}$ 
                                                & $\textrm{0} \textcolor{lightgray}{\pm \textrm{0}}$
                                                & $\textrm{0} \textcolor{lightgray}{\pm \textrm{1}}$ 
                                                \\
\bottomrule
\end{tabular}}
\end{table}

\begin{figure}[htbp]
    \centering
    \vspace{-4em}
    \begin{minipage}{0.23\textwidth}
        \centering
        \includegraphics[width=\textwidth]{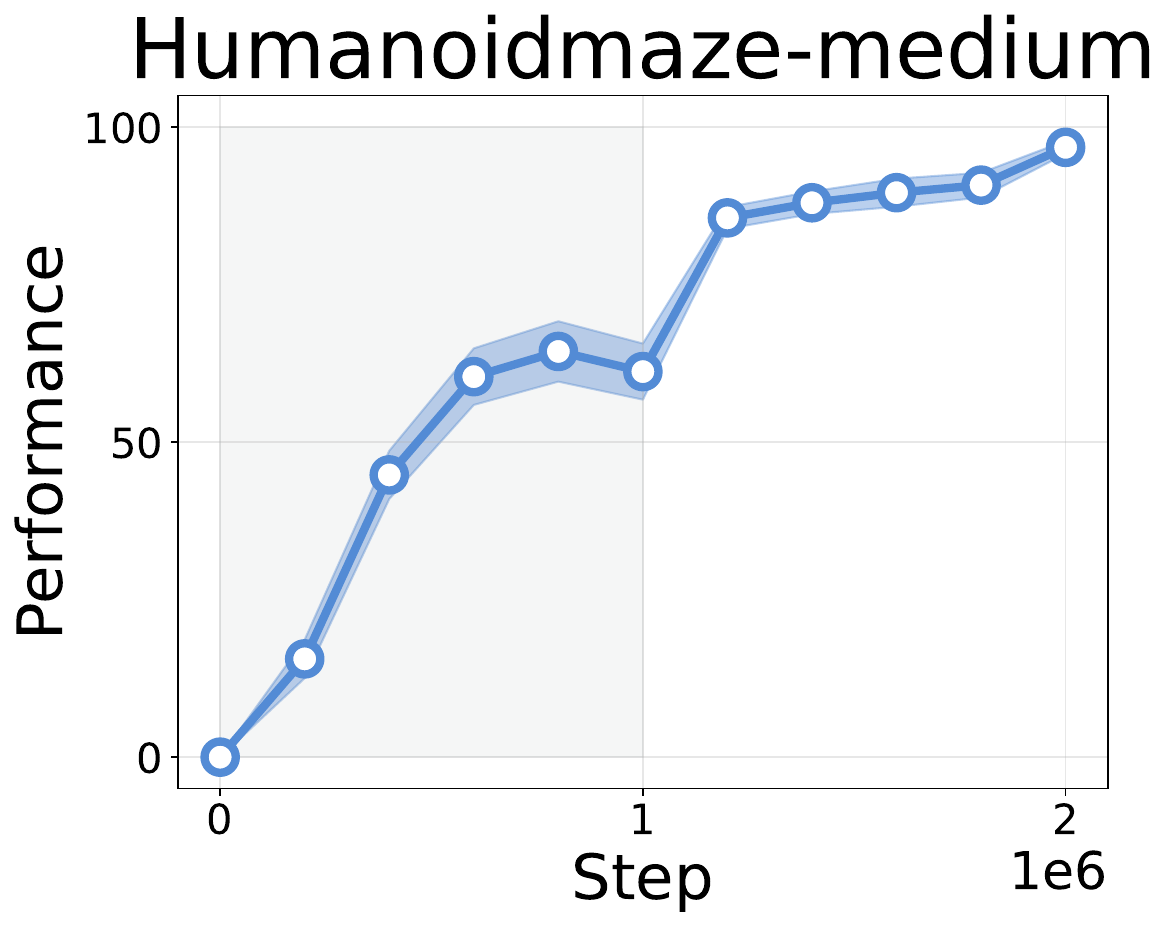}
    \end{minipage}
    \hfill
    \begin{minipage}{0.23\textwidth}
        \centering
        \includegraphics[width=\textwidth]{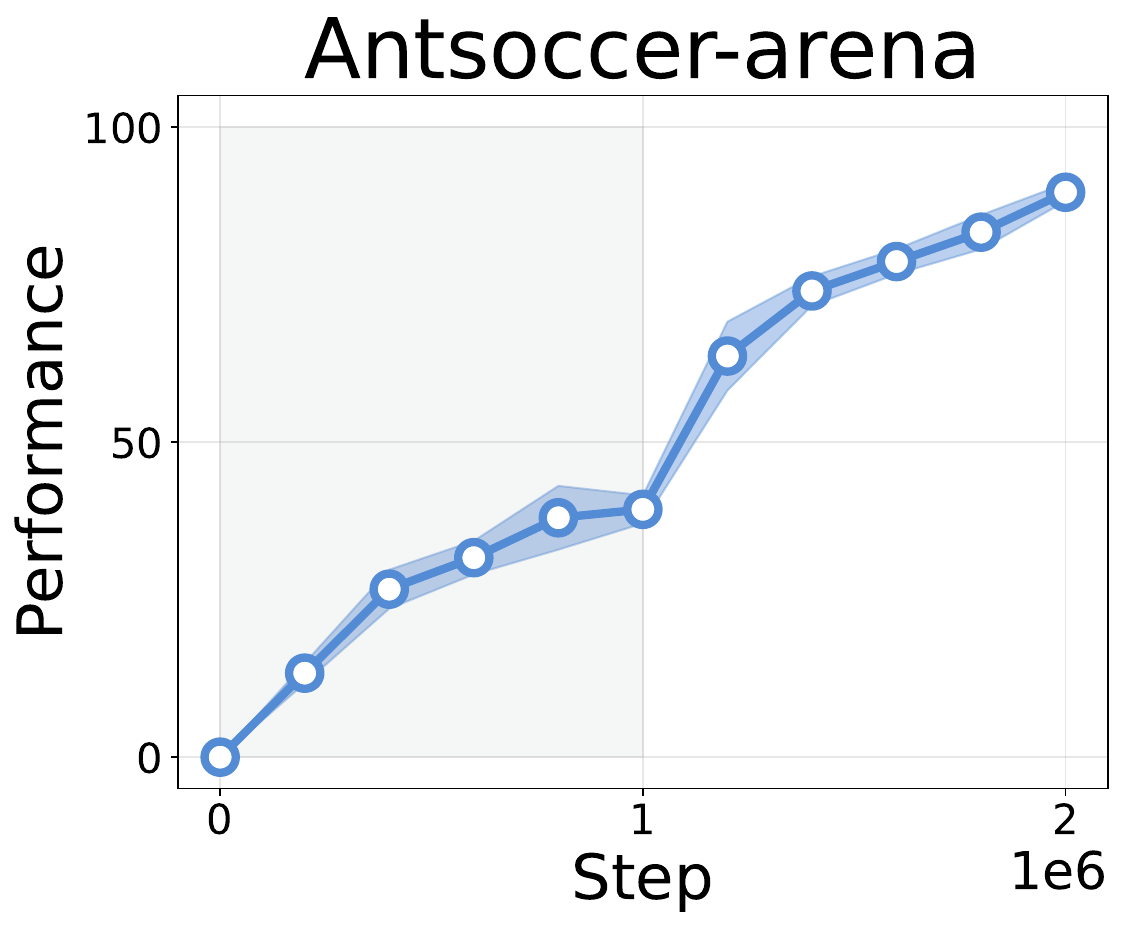}
    \end{minipage}
    \hfill
    \begin{minipage}{0.23\textwidth}
        \centering
        \includegraphics[width=\textwidth]{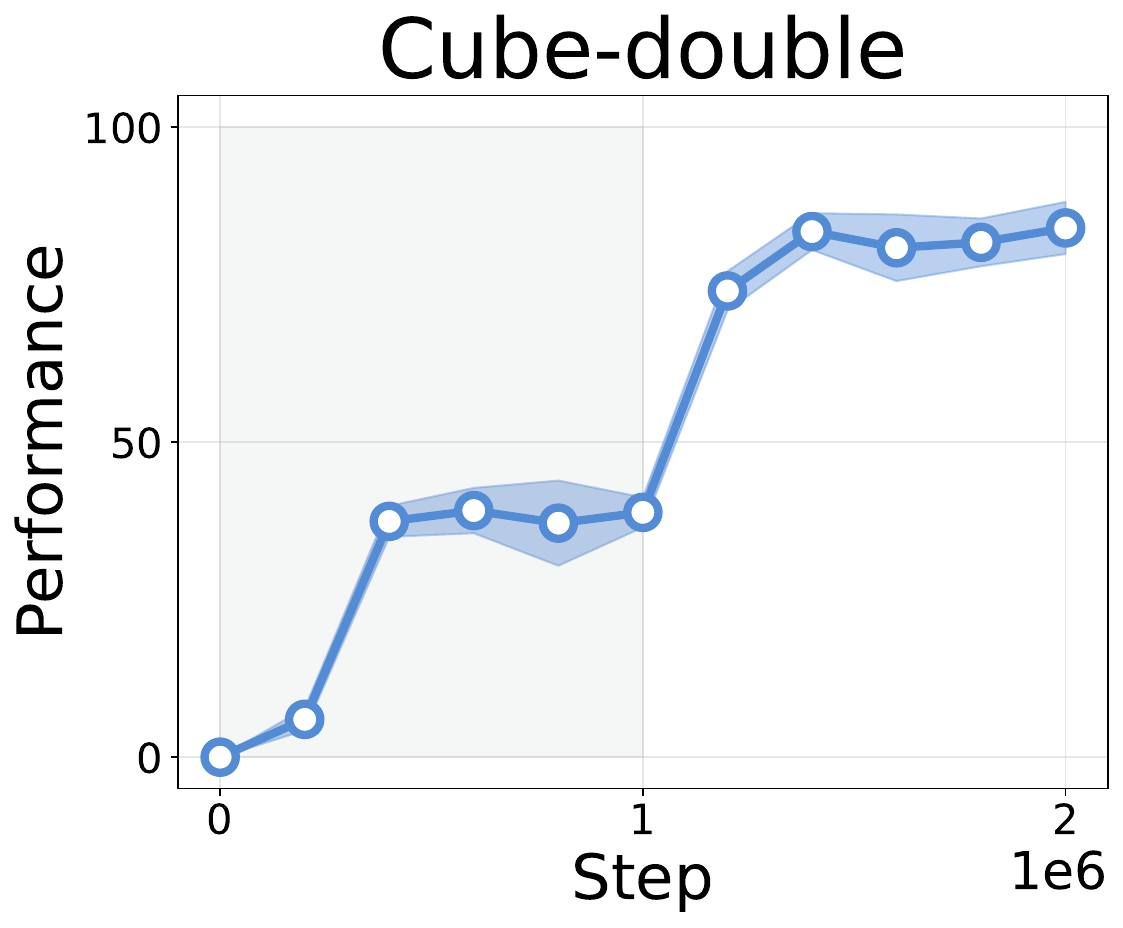}
    \end{minipage}
    \hfill
    \begin{minipage}{0.23\textwidth}
        \centering
        \includegraphics[width=\textwidth]{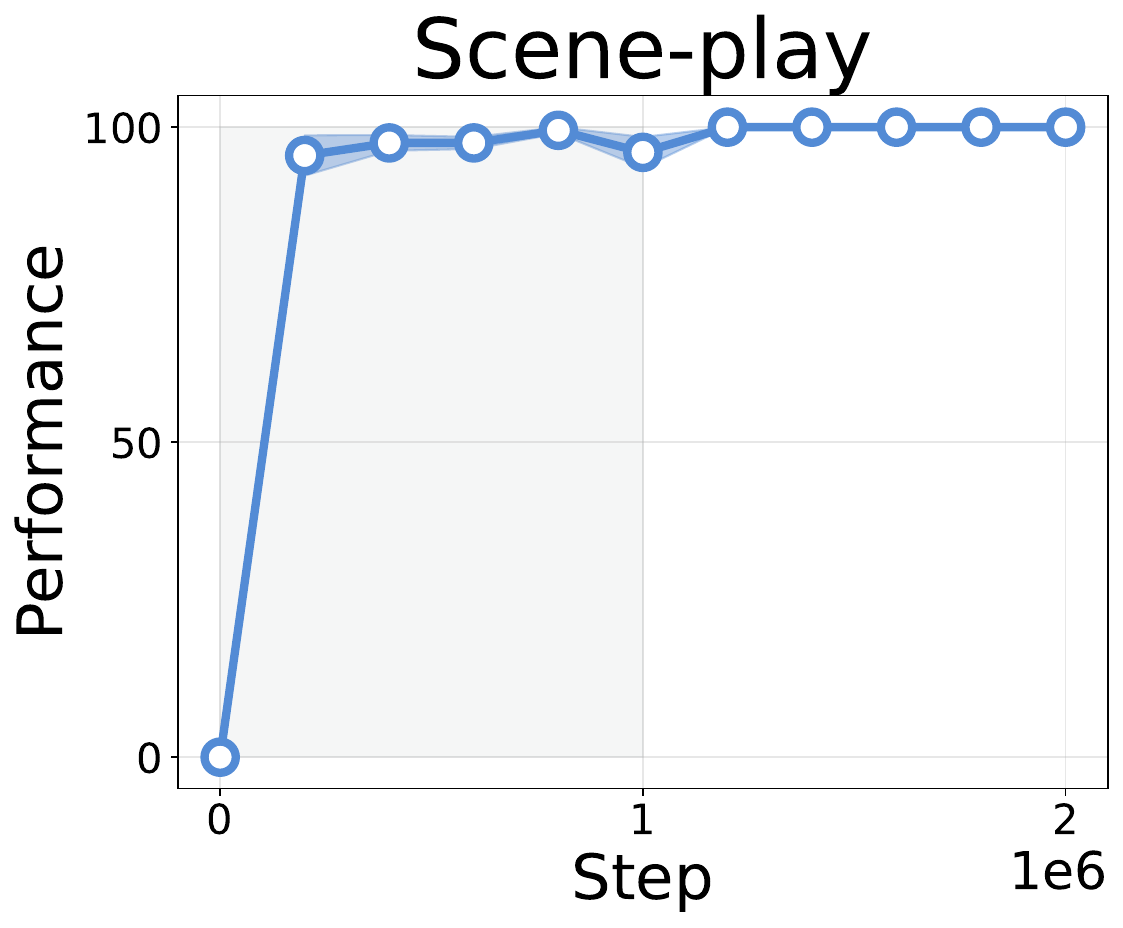}
    \end{minipage}
    \caption{\small \textbf{Offline-to-online visualization.} DIPOLE presents a stable fine-tuning process on OGBench.}
    \vspace{3em}
     \label{fig:o2o}
\end{figure}

\section{Details on E2E AD benchmarks}

\label{sec:detail_ad}

\subsection{Model Architecture}
We employ the pretrained Florence-2-large model \citep{xiao2024florence} as the visual-language encoder, paired with a 475M-parameter Diffusion Transformer as the action decoder. The visual input comprises images from Front, Front-Left, and Front-Right perspectives, while the language input consists of driving commands provided by the dataset. Encoder output tokens are processed by the action decoder through a cross-attention block, which ultimately generates the predicted trajectory.

\subsection{Training Procedure}
The training process consists of two phases: pretraining and reinforcement learning fine-tuning. In the pretraining phase, we utilize trainval frames from the NAVSIM dataset to jointly train the encoder and decoder using a diffusion loss objective. During the RL fine-tuning phase, the encoder is frozen, and two Low-Rank Adaptation (LoRA) adapters—a positive adapter and a negative adapter—are incorporated into every linear projection of the attention and MLPs. The NavSim benchmark’s PDMS score serves as the direct optimization target. Every 10 epochs, the replay buffer is cleared, and new model rollout trajectories and corresponding rewards are collected based on one epoch of data samples. For each data sample, the model generates $g$ trajectories, which are used to train the LoRA adapters over the subsequent 9 epochs. For each trajectory, we obtain its PDMS score vector $\textbf{r} = \{r_1, r_2, \dots, r_g\}$, enabling estimation of the advantage function~\citep{shao2024deepseekmath}:

\vspace{-1em}
\begin{equation}
\begin{aligned}
\label{eq:ad_advantage}
G(s,a) = A(s,a)=\frac{r_i-mean(\mathbf{r})}{std(\mathbf{r})}
\end{aligned}
\end{equation}

\subsection{Inference and Evaluation}
During inference, the \textit{DP-VLA} encoder extracts features from the input images and driving commands. The denoising process is solved using the DPM-Solver\citep{lu2022dpm}, with the action decoder iteratively predicting the denoised trajectory over 10 steps to produce the final clean trajectory.
For evaluation on the NAVSIM benchmark, the proposed trajectory is fed into an LQR tracker and dynamics model to compute the posterior trajectory. The final PDM score is derived from this posterior trajectory, satisfying the benchmark’s evaluation criteria~\citep{Dauner2024NEURIPS}.

\subsection{Hyperparameters}

We summarize the training details of DP-VLA in Table \ref{table:ad_hyp}. LoRA adapters are applied to all linear projections, including the QKVO projections in Attentions and Linears in FFNs.

\begin{table}[H]
\caption{DIPOLE hyperparameters used in DP-VLA}
\vspace{-1em}
\label{table:ad_hyp}
\centering
\begin{tabular}{@{}lll@{}}
\toprule
\multicolumn{1}{c}{\multirow{7}{*}{\centering Pre-train Hyperparameters}} & Hyperparameter                 & Value     \\ \cmidrule(lr){2-3}
                                                                    & Optimizer                            & AdamW     \\
                                                                    & Learning rate                        & 1e-4      \\
                                                                    & Learning epochs                      & 100 \\
                                                                    & Mini-batch                           & 16        \\
                                                                    \cmidrule(lr){1-3}
\multicolumn{1}{c}{\multirow{5}{*}{\centering DIPOLE navtrain Hyperparameters}} & Optimizer                            & AdamW    \\
                                                                    & Learning rate                        & 1e-4     \\
                                                                    & Learning steps                       & 2.069k  \\
                                                                    & Mini-batch                           & 56        \\
                                                                    & Group size $g$                          & 10 \\
                                                                    \cmidrule(lr){1-3}
\multicolumn{1}{c}{\multirow{5}{*}{\centering DIPOLE navtest Hyperparameters}} & Optimizer                            & AdamW    \\
                                                                    & Learning rate                        & 1e-4     \\
                                                                    & Learning steps                       & 11.52k  \\
                                                                    & Mini-batch                           & 4        \\
                                                                    & Group size $g$                          & 25 \\
                                                                    \cmidrule(lr){1-3}
\multicolumn{1}{c}{\multirow{5}{*}{\centering \textcolor{rebuttle}{LoRA Hyperparameters}}}        & \textcolor{rebuttle}{Rank}      & \textcolor{rebuttle}{16}  \\
                                                                    & \textcolor{rebuttle}{Alpha}       & \textcolor{rebuttle}{16}    \\
                                                                    & \textcolor{rebuttle}{Dropout}                              & \textcolor{rebuttle}{0.0}     \\
                                                                    & \textcolor{rebuttle}{Num. params(Each/Total)} & \textcolor{rebuttle}{6.68M/13.37M}    \\
                                                                    & \textcolor{rebuttle}{Ratio params(Each/Total)}  & \textcolor{rebuttle}{1.4\%/2.8\%}    \\
                                                                    \bottomrule
\end{tabular}
\end{table}




\subsection{More Cases}

We selected representative scenarios from navTest and superimposed the trajectories of DP-VLA before and after DIPOLE fine-tuning to visualize the behavioral differences in Figure \ref{fig:ad_morecase}. It can be observed that while the pre-fine-tuned model exhibits off-road driving or collisions, these rule-violating and dangerous behaviors are effectively rectified after fine-tuning.

\begin{figure}[p]
	\begin{minipage}[t]{\linewidth}
		\centering
		\includegraphics[width=0.24\textwidth]{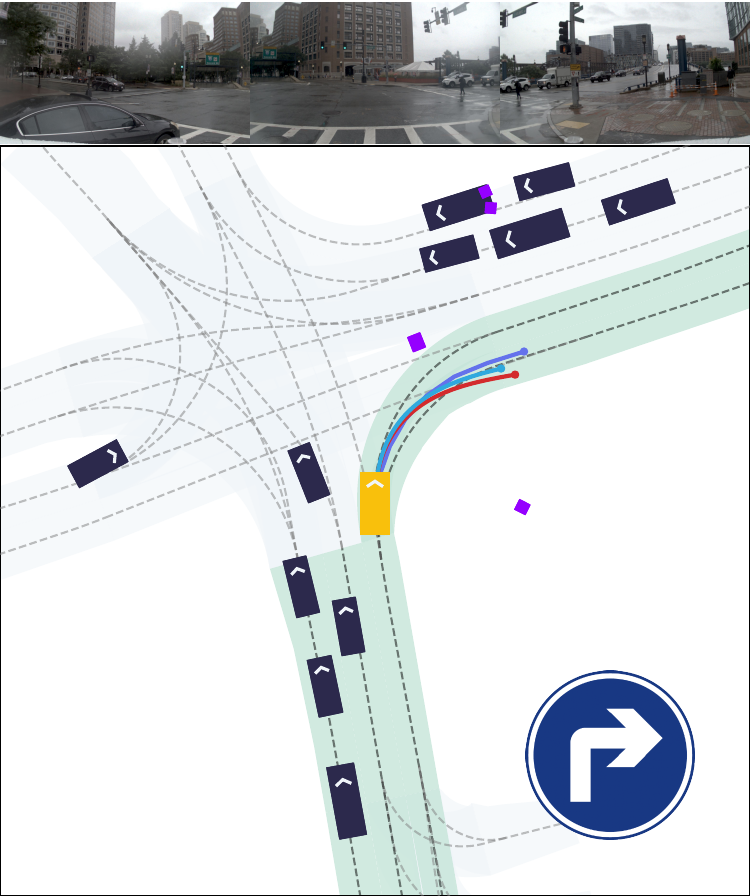}
		\includegraphics[width=0.24\textwidth]{fig/ad/appendix/11bf2d4580ab5bc2.pdf}
		\includegraphics[width=0.24\textwidth]{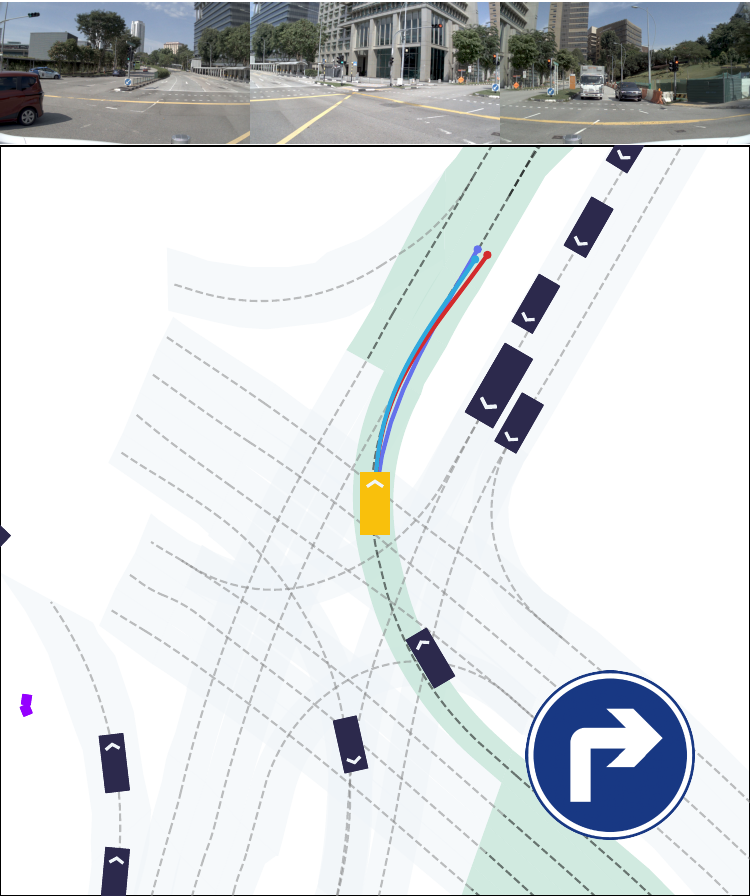}
		\includegraphics[width=0.24\textwidth]{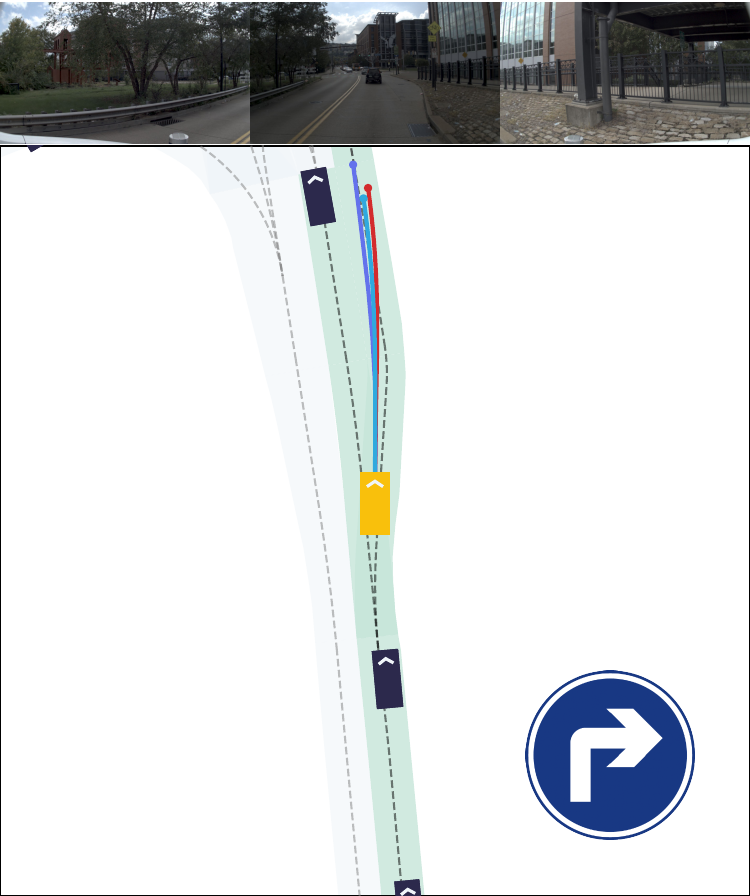}
        \vspace{1em}
	\end{minipage}
    \begin{minipage}[t]{\linewidth}
		\centering
		\includegraphics[width=0.24\textwidth]{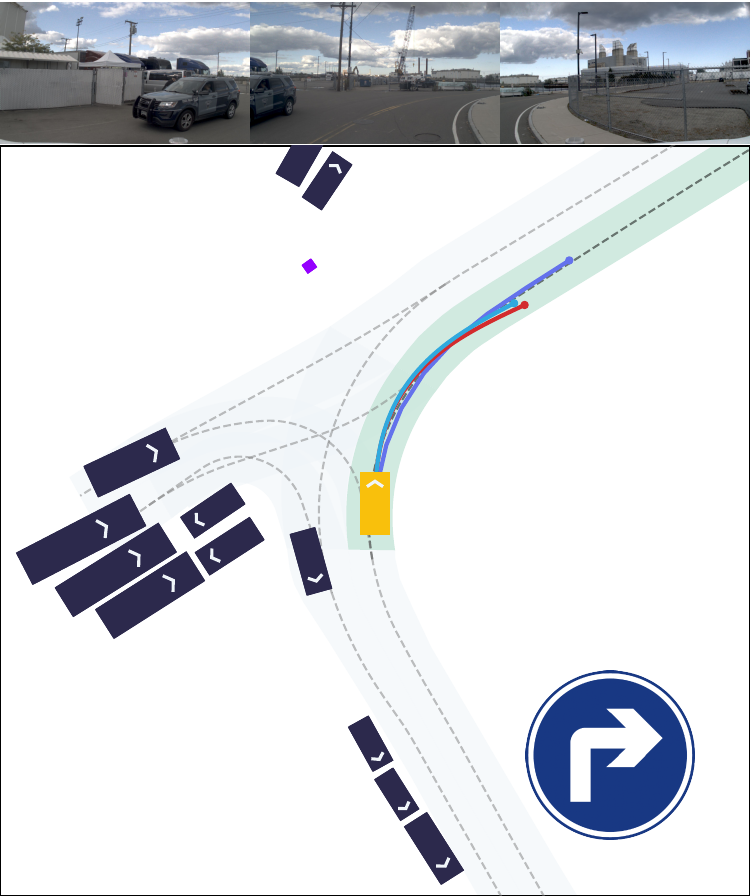}
		\includegraphics[width=0.24\textwidth]{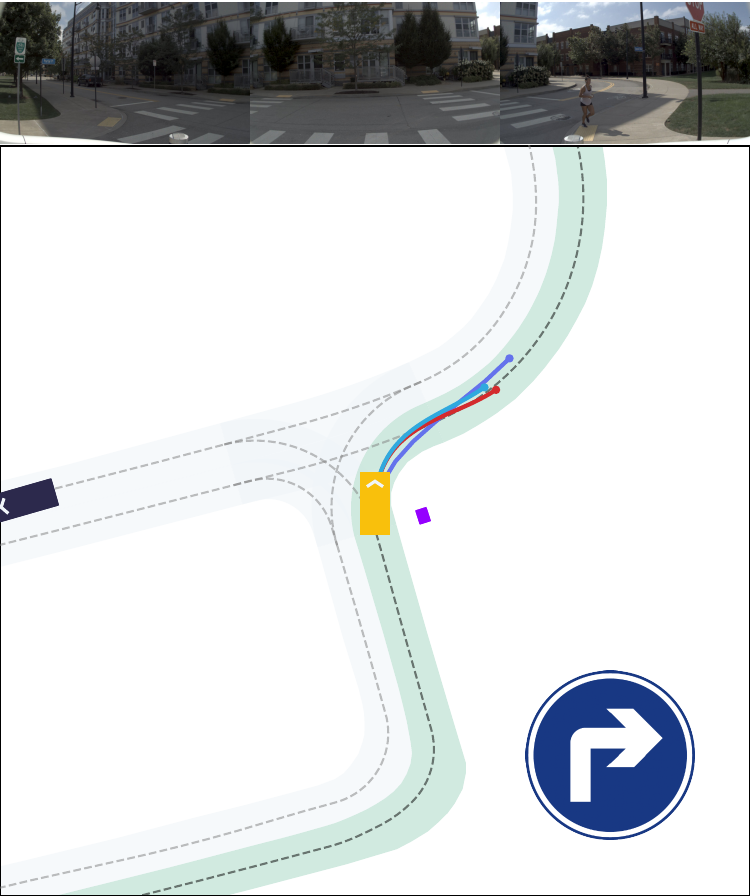}
		\includegraphics[width=0.24\textwidth]{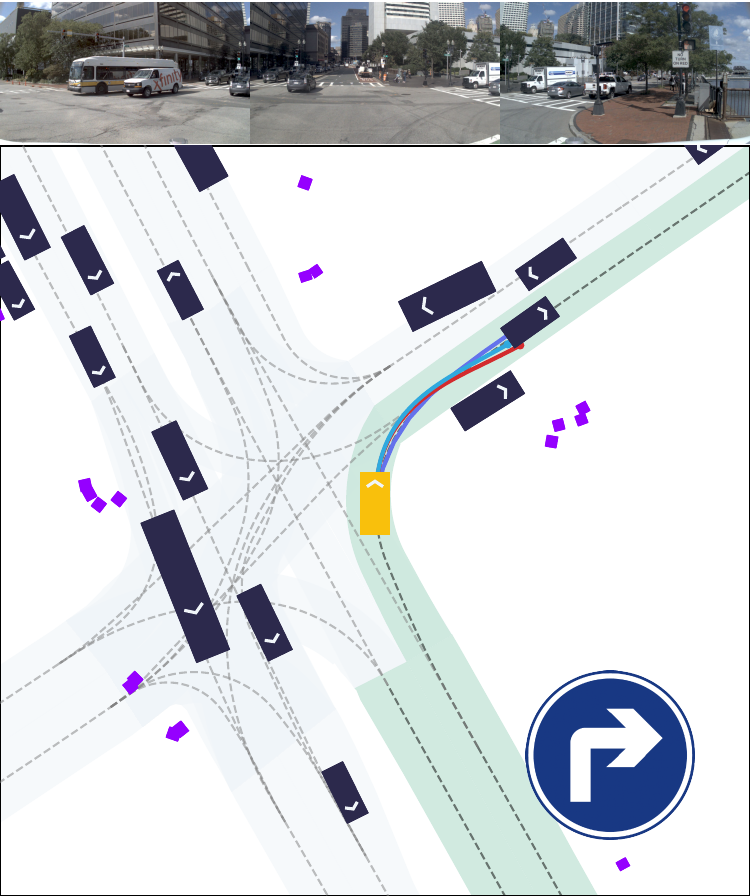}
		\includegraphics[width=0.24\textwidth]{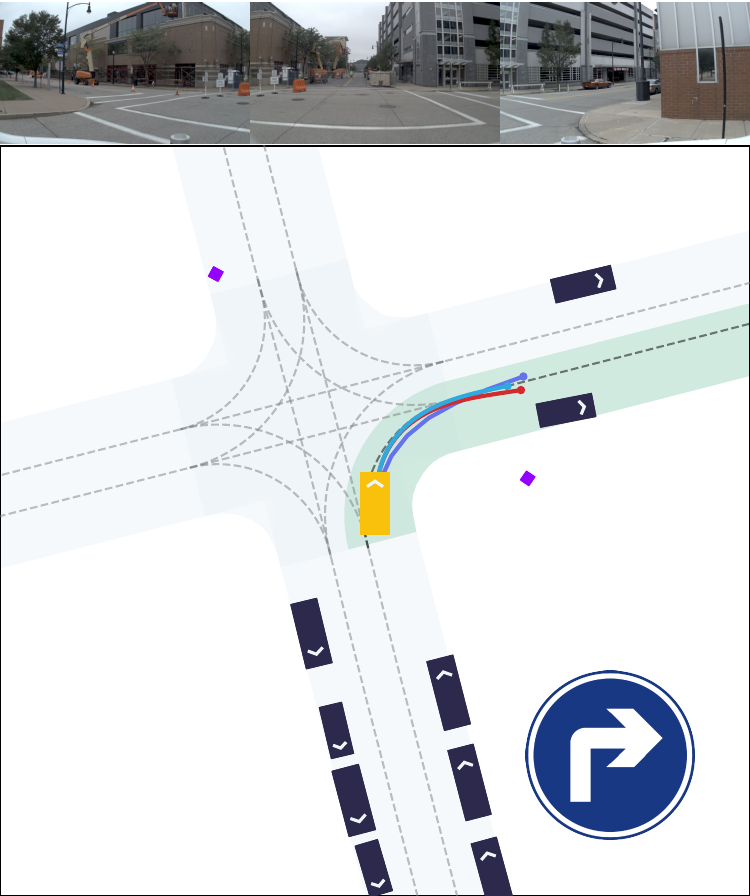}
        \vspace{1em}
	\end{minipage}
    \begin{minipage}[t]{\linewidth}
		\centering
		\includegraphics[width=0.24\textwidth]{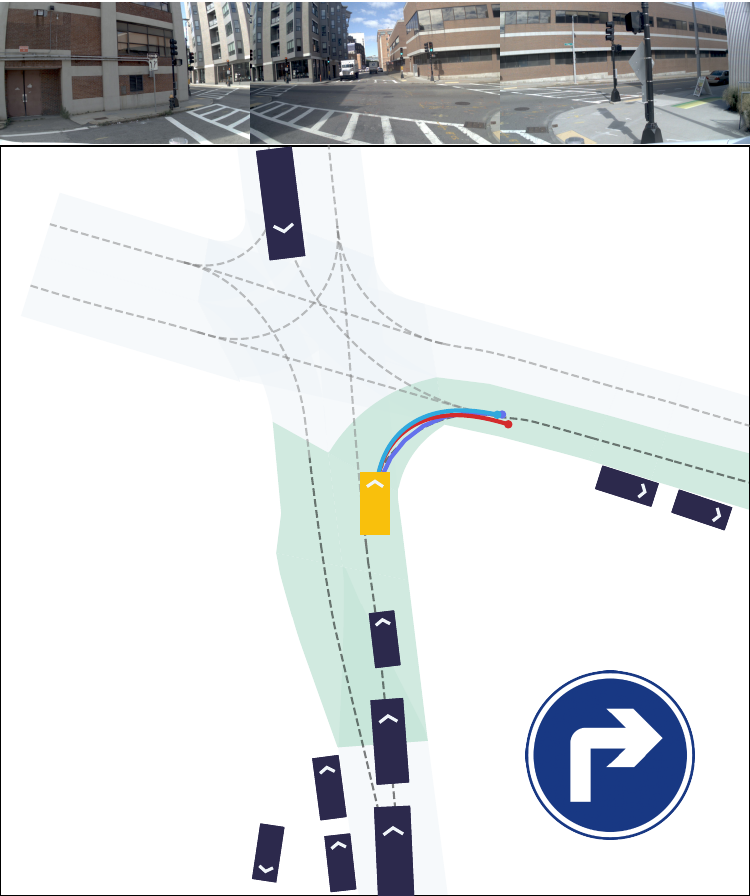}
		\includegraphics[width=0.24\textwidth]{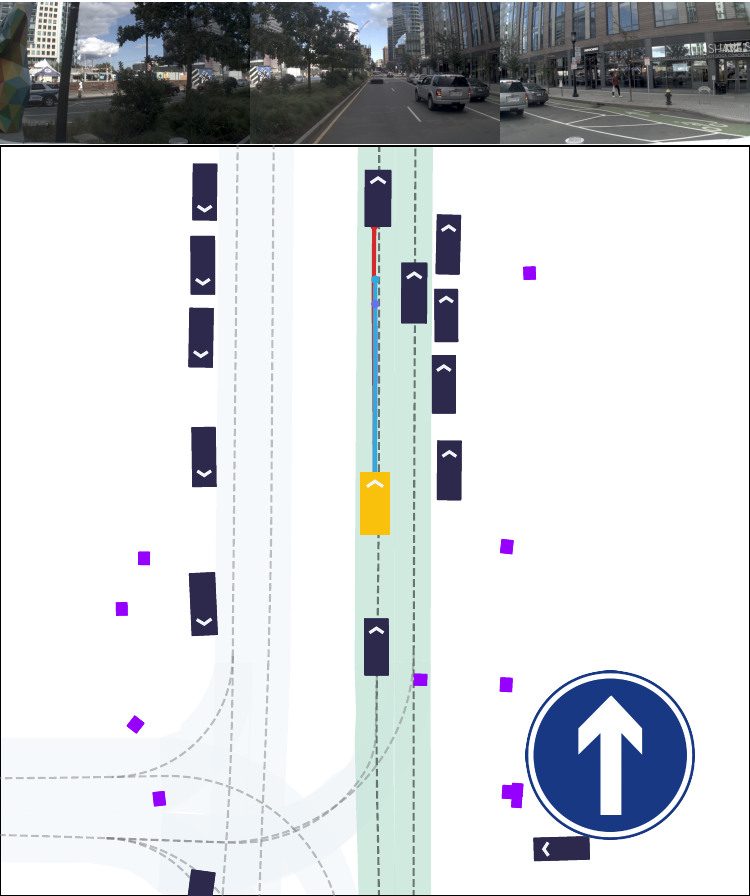}
		\includegraphics[width=0.24\textwidth]{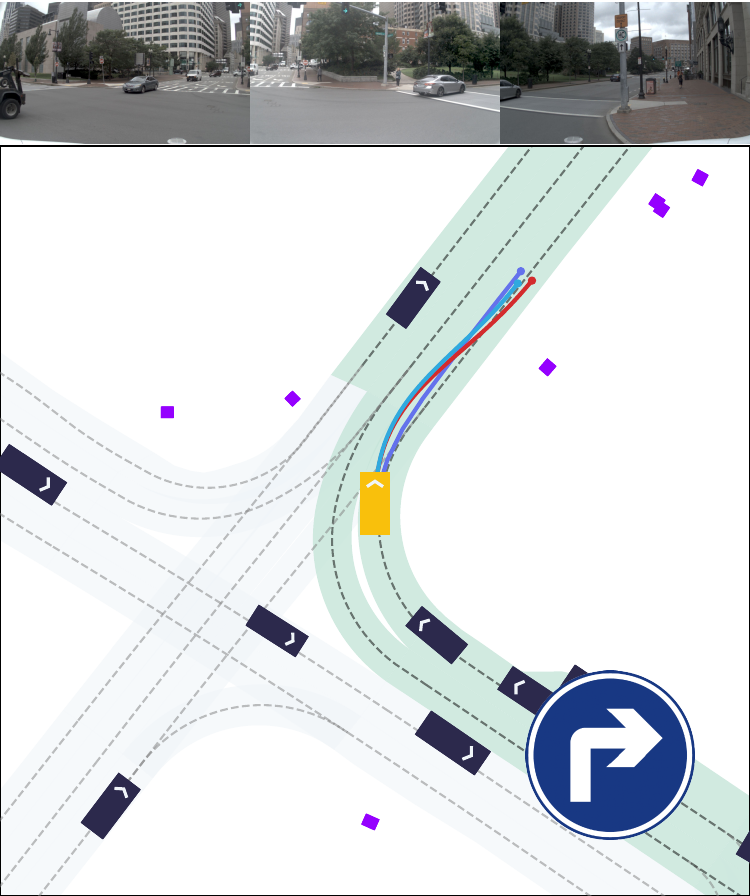}
		\includegraphics[width=0.24\textwidth]{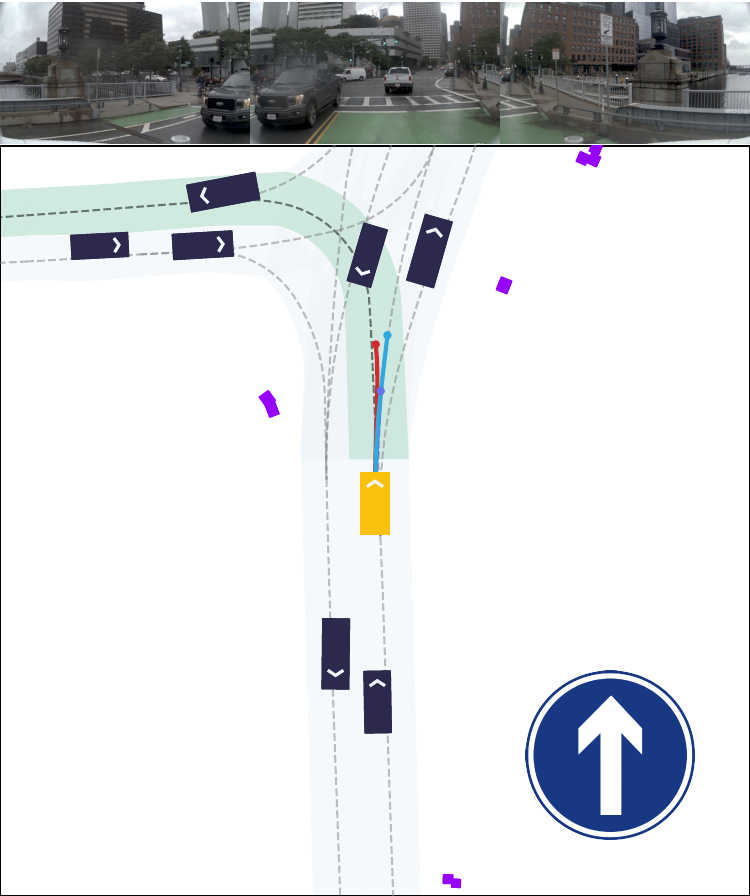}
        \vspace{1em}
	\end{minipage}
    \begin{minipage}[t]{\linewidth}
		\centering
		\includegraphics[width=0.24\textwidth]{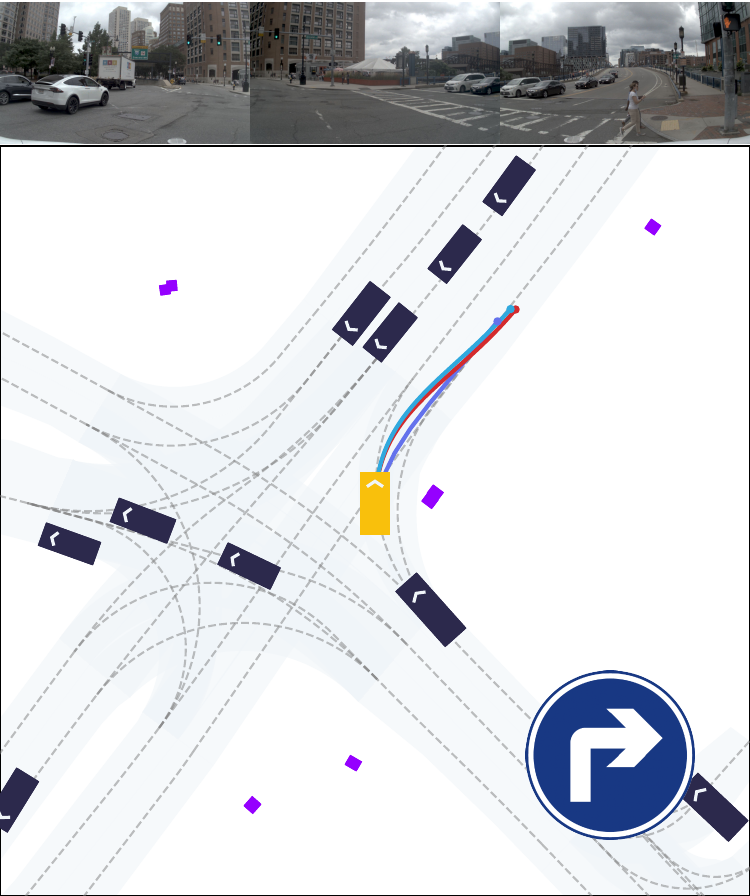}
		\includegraphics[width=0.24\textwidth]{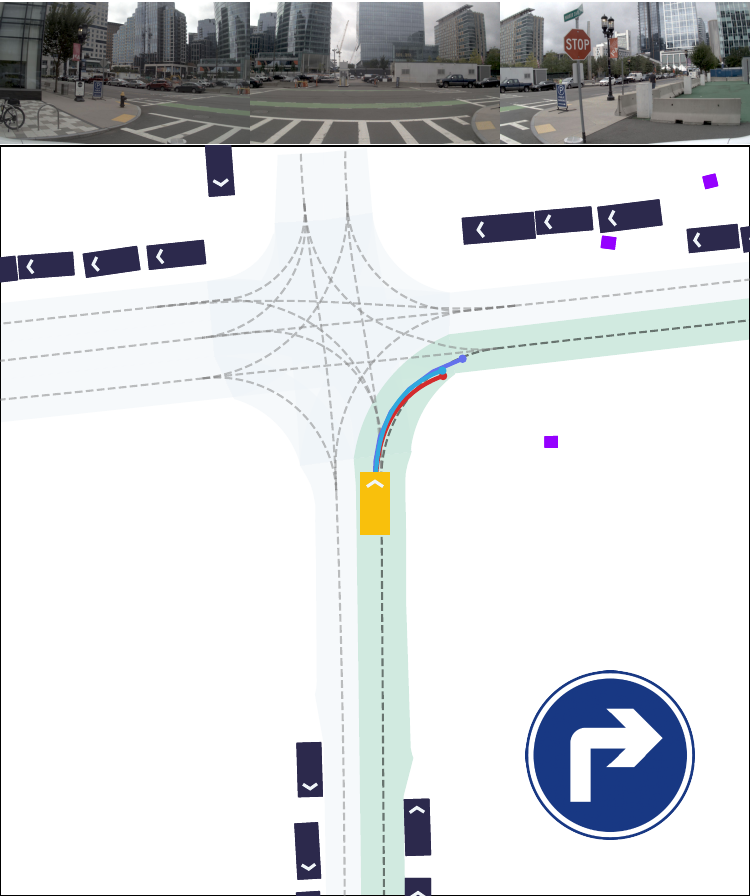}
		\includegraphics[width=0.24\textwidth]{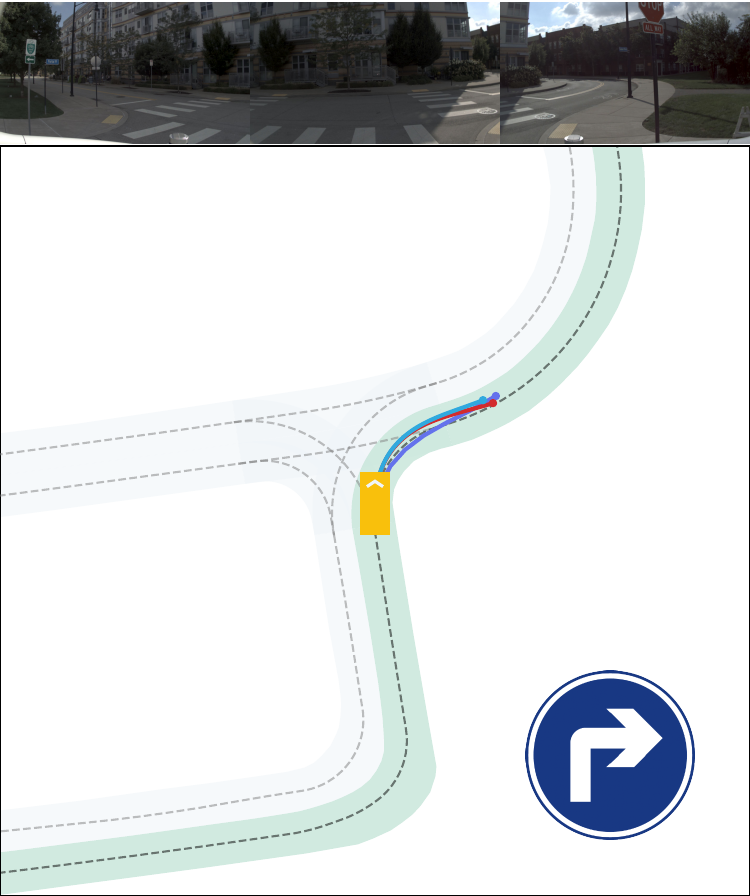}
		\includegraphics[width=0.24\textwidth]{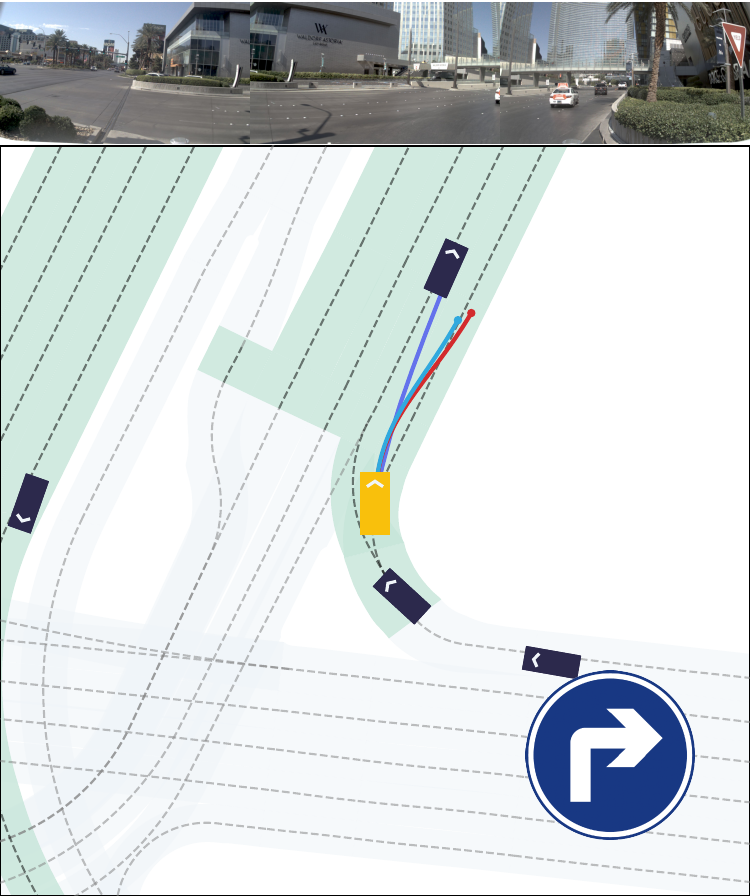}
        \vspace{1em}
	\end{minipage}
\caption{ \small NAVSIM Results: {\color[HTML]{2FA8DF} \textit{DP-VLA} w/ \textit{DIPOLE}} fine-tuned model trajectory; {\color[HTML]{6472EC} ground truth} ego trajectory; {\color[HTML]{D32B2D} \textit{DP-VLA}} imitation pretrained model trajectory.
}
\label{fig:ad_morecase}
\vspace{3em}
\end{figure}

\section{Limitation \& Discussion \& Future Work}
\label{sec:limitation}

Here, we discuss the limitations, potential solutions, and promising future directions of our work. While this paper presents a policy optimization method for achieving both greedy and stable policy training, we observe that performance is highly dependent on the quality of the value function. Developing effective value function estimation methods for diffusion-based reinforcement learning remains an important direction for future research. Additionally, our approach remains within the behavior-regularized optimization framework, which inherently constrains the policy relative to a reference policy. We attempt to address this limitation through controllable greediness and a greedified optimization objective. While our method has demonstrated applicability in complex autonomous driving tasks, we anticipate its potential extension to other domains.

\end{document}